\colorlet{gr}{gray!45}
\colorlet{gr2}{gray!35}
\tikzset{fontscale/.style = {font=\relsize{#1}}}
\newcommand{\Author}{Manuel Schneckenreither% \orcidID{0000-0002-4812-4665}
}
\newcommand{\Title}{Average Reward Adjusted Discounted Reinforcement Learning}
\newcommand{\Subtitle}{Near-Blackwell-Optimal Policies for Real-World Applications}
\newcommand{\Keywords}{machine learning, reinforcement learning, average reward, operations
  research, admission control.}
\newcommand{\ra}[1]{\renewcommand{\arraystretch}{#1}}
\newcommand{\ratab}{\ra{1.2}}
\newcommand\MS[2][r]{\ifx t#1 \textcolor{blue}{[\textbf{MS:} #2]} \else \begin{center}\textcolor{blue}{\textbf{MS:} #2} \end{center} \fi}
\journalname{Special Issue: Machine Learning and Combinatorial Optimization}
\begin{document}

\title{\Title%\thanks{Grants or other notes
  % about the article that should go on the front page should be placed here. General
  % acknowledgments should be placed at the end of the article.}
}
\subtitle{\Subtitle}

% \titlerunning{Short form of title}        % if too long for running head

\author{\Author}
\institute{M. Schneckenreither \at
  Department of Information Systems, Production and Logistics Management \\
  University of Innsbruck, Innsbruck 6020, Austria\\
  \email{manuel.schneckenreither@uibk.ac.at}
}
\date{Received: date / Accepted: date}
% The correct dates will be entered by the editor

\maketitle

\begin{abstract}
  Although in recent years reinforcement learning has become very popular the number of successful
  applications to different kinds of operations research problems is rather scarce.
  Reinforcement learning is based on the well-studied dynamic programming technique and thus also
  aims at finding the best stationary policy for a given Markov Decision Process, but in contrast
  does not require any model knowledge.
  The policy is assessed solely on consecutive states (or state-action pairs), which are observed
  while an agent explores the solution space.
  The contributions of this paper are manifold. First we provide deep theoretical insights to the
  widely applied standard discounted reinforcement learning framework, which give rise to the
  understanding of why these algorithms are inappropriate when permanently provided with non-zero
  rewards, such as costs or profit.
  Second, we establish a novel near-Blackwell-optimal reinforcement learning algorithm. In contrary
  to former method it assesses the average reward per step separately and thus prevents the
  incautious combination of different types of state values. Thereby, the Laurent Series expansion
  of the discounted state values forms the foundation for this development and also provides the
  connection between the two approaches.
  Finally, we prove the viability of our algorithm on a challenging problem set, which includes a
  well-studied M/M/1 admission control queuing system. In contrast to standard discounted
  reinforcement learning our algorithm infers the optimal policy on all tested problems.
  The insights are that in the operations research domain machine learning techniques have to be
  adapted and advanced to successfully apply these methods in our settings.%
  \keywords{\Keywords}
  %
  % The main advantages of reinforcement learning over dynamic programming are that (i) the problem
  % space is explored by an agent and thus only expectantly interesting parts of the problem space
  % need to be assessed and (ii) a comprehensive knowledge of the underlying model becomes unnecessary
  % as the states are evaluated by consecutively observed states solely.
\end{abstract}
\pagebreak
\section{Introduction}%
\label{sec:Introduction}

% \MS[t]{More motivation?}

Artificial Intelligence, or more precisely machine learning (ML), has been rising over the last
decades and became a very popular topic in science and industry. It is the field that tries to mimic
human learning capabilities using computer algorithms. This is done by mathematically modeling the
learning process by finding a function that matches the desired outcome according to a given input.
Machine learning itself can be divided into the categories of supervised learning, unsupervised
learning and reinforcement learning~\cite[p.1ff]{sutton1998introduction}, where we focus on
reinforcement learning in this paper.
Despite astonishing results in the application of reinforcement learning to game-like domains, for
instance
chess~\cite{Silver17_MasteringChessAndShogiBySelfPlayWithAGeneralReinforcementLearningAlgorithm},
Go~\cite{Silver16_MasteringTheGameOfGoWithDeepNeuralNetworksAndTreeSearch}, and various Atari
games~\cite{Mnih15_HumanlevelControlThroughDeepReinforcementLearningb}, no such success was reported
for non-episodic operations research problems yet.
We conjecture that this lack of reinforcement learning applications is due to the reason that the
widely applied standard discounted reinforcement learning framework is inappropriate. To overcome
this issue, we present a discounted reinforcement learning variant which is able to deduce
near-(Blackwell-)optimal policies, i.e.\@ it performs better for real-world economic problem
structures, such as ones often found in the area of operations research.

Reinforcement Learning (RL), which is mathematically based on dynamic programming, is similar to
supervised learning, but differs in the way that input-output pairs are actually never presented to
the algorithm, but an oracle function rewards actions taken by an
agent~\cite[p.2]{sutton1998introduction}.
In contrast to dynamic programming, RL has the advantages that (i) the problem space is explored by
an agent and thus only expectantly interesting parts need to be assessed and (ii) a comprehensive
knowledge of the underlying model becomes unnecessary as the states are evaluated by consecutively
observed states solely.
Like in operations research the goal is to optimise some measure, termed \textit{reward} in RL, by finding
the best function for the underlying problem.
The basic idea of RL is quite simple. An agent explores the solution space by observing the current
system state and taking an action which makes the system traverse to the next state. Every time the
agent chooses an action a reward is generated by the (oracle) reward function. The agent tries to
learn what actions to take to maximise the reward over time, i.e.\@ not only to maximise over the
actions for the current state, but it also respects possible stochastic transitions and reward values
of future states. RL is most often applied to control problems, games and other sequential decision
making tasks~\cite{sutton1998introduction}, where in almost all steps the reward function returns
\(0\). However, in real-world (economic) applications, where success is usually measured in terms of
profit or costs, the function intuitively returns a non-zero reward in almost all steps as it
continuously reports the actions results. Therefore, we focus on problems that produce an average
reward per step that cannot be approximated by \(0\).
%
% Examples?
%
Application areas of such a model could be an agent that periodically decides on buying and selling
instruments at the stock market, an agent performing daily replenishment decisions or many other
decision problems that arise in hierarchical supply chain and production planning systems, which are
strongly capacity-oriented~\cite{zijm2000towards} and with aggregated feedback (e.g.,
see~\cite{rohde2004hierarchical,hax1973hierarchical}).

% Order Release
Only a very limited number of operations research optimisation papers that are using RL and imposing
this structure are available.
Schneckenreither and Haeussler~\cite{schneckenreither2018reinforcement} presented an
RL algorithm which optimises order release decisions in production control environments. They use
sophisticated additions to allow the agent to link the actions to the actual rewards. The results
were compared to static order release measures only, which were outperformed.
%
% dual sourcing inventory replenishment
Gijsbrechts et al.~\cite{gijsbrechts2018can} apply RL on a dual sourcing problem for replenishment
of a distribution center using rail or road, and compare their results to optimal solutions if
tractable or otherwise established heuristics. They found that hyperparameter\footnote{In machine
  learning parameters to be set before the start of the experiment are called hyperparameters.}
tuning is effort-intensive and the resulting policies are often not optimal, especially for larger
problems.
%
% Bin Packing, Newsvendor and Vehicle Routing
Balaji et al.~\cite{balaji2019orl} use out-of-the-box RL algorithms with simple 2 layer
neural networks to tackle stochastic bin packing, newsvendor and vehicle routing problems (VRP). The
VRP is a generalised travelling salesman problem (TSP) where one or more vehicles have to visit nodes in a
graph. They report to sometimes beat the benchmarks and find sensible solutions.
%
% Vehicle routing (not using rewards in steps, only "Once a sequence of the nodes to be visited is
% sampled)"
The capacitated VRP is also tackled with RL by Nazari et al.~\cite{nazari2018reinforcement}. They
minimise the total route length and compare the results to optimal solutions for small instances.
Although the optimum is not reached, one instance, which keeps track of the most probable paths,
performs better than well established heuristics. Also for larger problem sizes this technique seems
to outperforms the other tested methods.
Vera and Abad~\cite{vera2019deep} extend this model to a multi-agent algorithm and thus
tackle the capacitated multi-vehicle routing problem with a fixed fleet size. They also report
better results compared to the heuristics, especially for large problem sizes, but in contrast to
\cite{nazari2018reinforcement} are outperformed by Google's OR-Tools\footnote{See
  \url{https://developers.google.com/optimization}.}.
These applications, like other TSP applications~\cite[e.g.]{bello2016neural,kool2018attention}, of
RL to the VRP, except for~\cite{balaji2019orl}, however, calculate the reward according to the
length to the finished route. Thus, the average reward for long routes can be approximated with
\(0\) as the decision problem is episodic and therefore perform well.
%
% TSP
% not using rewards all the time:
%
% [Bello et al. 2016] Bello, I.; Pham, H.; Le, Q. V.; Norouzi,
% M.; and Bengio, S. 2016. Neural combinatorial op-
% timization with reinforcement learning. arXiv preprint
% arXiv:1611.09940.
%
% [Kool, van Hoof, and Welling 2018] Kool, W.; van Hoof, H.;
% and Welling, M. 2018. Attention, learn to solve routing
% problems!
%
% fleet management
% \citeauthor{}~\cite{}
% [Lin et al. 2018] Lin, K.; Zhao, R.; Xu, Z.; and Zhou, J.
% 2018. Efficient large-scale fleet management via multi-
% agent deep reinforcement learning. In Proceedings of the
% 24th ACM SIGKDD International Conference on Knowl-
% edge Discovery and Data Mining, 1774–1783. ACM.
%
%
% beer game
Finally there are applications to the beer distribution
game~\cite{chaharsooghi2008reinforcement,oroojlooyjadid2017deep}, however the problem size and
therefore its complexity imposed by the beer game is rather small.
All of the above cited papers use highly sophisticated methods to tackle rather small problem sizes
or result in far-from-optimal solutions. One reason for this is that they utilise algorithms of the
widely applied standard discounted RL framework, which are usually evaluated on games. However,
games are structured to have a terminal state describing victory or defeat,
which is reported as positive or
negative reward to the agent, while any other preceding decision returns reward \(0\).
This does not comply with most of the above cited non-episodic optimisation problems.

Although some of these applications tackle intractable problems, in this paper we are concerned with
smaller problem sizes to be able to investigate the underlying mathematical issues.
A similar approach was taken by Mahadevan. In a series of papers the authors investigate average
reward RL\@. In average reward RL no discount factor is used but rather nested constraint problems
are approximated iteratively.
In~\cite{Mahadevan96_AverageRewardReinforcementLearningFoundationsAlgorithmsAndEmpiricalResults}
they establish mainly foundations and examine R-Learning, an average reward RL algorithm.
R-learning was proposed by Schwartz in~\cite{schwartz1993reinforcement} and is similar to our
approach, but less sophisticated and therefore unable to produce near-optimal-policies.
Then in~\cite{Mahadevan96_OptimalityCriteriaInReinforcementLearning} RL optimality criteria are
thoroughly discussed, while in the
papers~\cite{Mahadevan96_AnAveragerewardReinforcementLearningAlgorithmForComputingBiasoptimalPolicies,Mahadevan96_SensitiveDiscountOptimalityUnifyingDiscountedAndAverageRewardReinforcementLearning}
they present model-based and model-free bias-optimal algorithms. Similarly, Tadepalli and
Ok~\cite{Tadepalli98_ModelbasedAverageRewardReinforcementLearning} present an average reward RL
algorithm called H-Learning, which under certain assumptions finds bias-optimal values. Furthermore,
they apply it to simulated automatic guided vehicle scheduling.
Later Mahadevan et al.~\cite{mahadevan1997self} lift the model-free average reward RL algorithm to
continuous-time semi-Markov Decision Processes (semi-MDPs), which handle non-periodic decision
problems, and apply it to an inventory problem consisting of a single machine. Then the algorithm
of~\cite{mahadevan1997self}, called SMART, was applied on the optimisation of transfer lines using a
hierarchical approach~\cite{mahadevan1998optimizing} and preventive maintenance of a production
inventory system~\cite{das1999solving}. Although the results are promising, the adaption to
continuous-time problems eases the complexity for most applications. However, in practise usually
decision have to be made on a daily basis~\cite{enns2004work}. Therefore, we refrain from this
adaption and concentrate on standard MDPs only. Furthermore, continuous-time problems can be
converted through uniformisation into equivalent discrete time instances
(see~\cite{Puterman94,bertsekas1995dynamic}).

The reason for the small number of RL applications to non-episodic operations research problems and
the fact that none of the available applications report tremendous success, as in game-like areas,
can be explained by the average reward per step that the agent receives.
The average reward per step is usually by far the greatest part when the discounted state
values\footnote{Note that in contrary to economics where discounting is often motivated by interest
  rates in discounted RL the motivation of discounting future rewards origins from the fact that the
  ``infinite sum has a finite value as long as the reward sequence [..] is
  bounded''~\cite[p.59]{sutton1998introduction}.} are decomposed into its sub-components.
This results from the fact, that in contrast to the other parts, the average reward contributes to
the learned state-values in an exponentially up-scaled fashion. However, for most applications the
average reward is equal among all states and thus the agent has issues in choosing the best action.
Additionally, iteratively shifting all state values to the base imposed by the exponentially
up-scaled average reward easily causes the learning process to fail, finding itself in sub-optimal
maxima and with that complicating the hyperparameterisation process tremendously.
%
% As we will see this is due to the fact that discounted reinforcement learning is inappropriate for
% these kind of applications, and
On the opposite average reward RL is cumbersome and computationally expensive,
as nested constraints have to be solved.
Therefore, we contribute to the research streams of discounted and average reward RL, by combining
the best things of both worlds, that is, the stability and simplicity of standard discounted RL and
the idea of assessing the average reward separately and aiming for broader optimality criteria of
average reward RL with this work. Additionally, we contribute to the operations research domain,
which was the origin of the motivation for this work, by providing a new machine learning algorithm
and applying it to a well studied M/M/1 admission control queuing system.
%
% Cover Letter:
%
% \footnote{Please note that a previous version of this article will be presented at the
% Twenty-first International Working Seminar on Production Economics~\cite{schneckenreither2020}.}.
%
In summary the contribution is as follows.

\begin{itemize}
\item First we analyse and illustrate why standard discounted reinforcement learning is
  inappropriate for real-world applications, which usually impose an average reward that cannot be
  approximated with \(0\),
\item Second, we establish a novel near-Blackwell-optimal reinforcement learning algorithm and
  analytically prove its optimality, and
\item Third, show the viability of the algorithms by experimentally applying them to three problem
  specifications, one of which is a well-studied M/M/1 admission control queuing system.
\end{itemize}

%%   \paragraph{Structure.}
The rest of the paper is structured as follows. The next section provides an overview of the
proposed method. Section~\ref{sec:The_Laurent_Series_Expansion_of_Discounted_State_Values}
introduces discounted RL and average reward RL, and provides the linkage between the two
frameworks via the Laurent Series Expansion of discounted state values. The insights gained of the
expansion form the foundations for the developed average reward adjusted discounted RL method
established in Section~\ref{sec:Average_Reward_Adjusted_Discounted_Reinforcement_Learning}.
Section~\ref{sec:Experimental_Evaluation} proves the viability of the algorithm, while
Section~\ref{sec:Conclusion} concludes the paper.

\section{Overview of Average Reward Adjusted Discounted Reinforcement Learning}%
\label{sec:The_Need_of_More_Refined_Reinforcement_Learning_Agents_in_Real-World_Applications}

Before establishing the mathematical details of the method in the next sections, in this section we
give a high-level overview of the newly developed algorithm by providing the main concepts and ideas
using an example.

% Discounted RL is usually evaluated using game-like environments, for instance chess and
% Go~\cite{Silver17_MasteringChessAndShogiBySelfPlayWithAGeneralReinforcementLearningAlgorithm,Silver16_MasteringTheGameOfGoWithDeepNeuralNetworksAndTreeSearch},
% control theory problems or Atari
% games~\cite{Mnih15_HumanlevelControlThroughDeepReinforcementLearningb,mnih2016asynchronous}\footnote{See
%   also: \url{https://gym.openai.com/envs/}}.
% Although very convenient this brings major problems once these algorithms are applied in real-world
% scenarios. One field of research that has a vast history in optimisation is operations research.
Although very convenient the evaluation of newly developed RL algorithms on game-like environments,
for instance chess or Atari games, brings major issues once these algorithms are applied for
optimisation in structurally different real-world scenarios as found in operations research.
The most intuitive goal in such economic optimisations problems is to maximise for expected profit
(minimize expected costs) at every decision, that is, always taking the actions that expectantly
result in the greatest profit.
I.e.\@ when modelled as reinforcement learning process the reward function must return the
accumulated profit between consecutive decisions. However, reward functions of games are usually
designed to return a non-zero reward only after the agent reached the state describing victory or
defeat. Therefore, currently newly designed RL algorithms are evaluated with problems that impose an
average reward per step of approximately \(0\). This leads to the fact, that when discounted RL
techniques are applied to real-world scenarios, they often perform poorly.
As we will see the average reward per step plays a crucial role in the performance of reinforcement
learning algorithms.

Therefore, our approach in this work is to separately learn the average reward value and rather use
\textit{average reward adjusted discounted state values} to be able to selectively choose the best
actions. By using a sufficiently large discount factor the agent reduces the set of best actions to
the ones that are (bias-)optimal. Additionally, we approximate the average reward adjusted
discounted state values with a smaller discount factor, which allows us to choose actions
near-(Blackwell-)optimal.

% \begin{example}
%   \label{ex:three-states}
To clarify consider the Markov Decision Process (MDP) in Figure~\ref{fig:three-states} which
consists of two possible definite policies with the only action choice in state \(1\). Both policies
\(\pol_{A}\) (going left in 1) and \(\pol_{B}\) (going right) result in the same average reward
\(\avgrew^{\pol_{A}} = \avgrew^{\pol_{B}} = 1\).
However, only the A-loop is (bias-)optimal, as the actions with non-zero rewards are selected
earlier. E.g.\@ consider starting in state 1. Under the policy \(\pol_{A}\) which takes the A-loop the
reward sequence is \((2,0,2,0,\ldots)\), while for the other policy \(\pol_{B}\) it is
\((0,2,0,2,\ldots)\).
Our algorithm infers i) the average reward based on the state values of consecutively observed
states and the returned reward, ii) approximates so called bias values, and iii) uses a smaller
discount factor to infer the greatest error term values which only exist as the discount factor is
strictly less than \(1\). Intuitively bias values are the additional reward received when starting
in a specific state, and the error term incorporates the number of steps until the reward is
collected.
% As usual in reinforcement learning, we assume the aim of the agent is to maximise the expectantly
% collected reward

With a discount factor of \(0.999\) our implementation automatically infers policy \(\pol_{A}\) and
thus produces average reward adjusted discounted state values
\(\X_{0.999}^{\pol_{A}}((A,l)) = 0.493\) for going left (\(l\)) in state \(A\) and
\(\X_{0.999}^{\pol_{A}}((A,r)) = 0.492\) for going right (\(r\)) in under \(10k\) iterations. Here
we use state-action pairs as function parameters. The actual bias values, analytically inferred, are
\(\V^{\pol_{A}}((A,l)) = \V^{\pol_{A}}((A,r)) = 0.5\), where the differences to the estimated values
are due to the error term.
%
% To overcome this issue we use an \(\epsilon\)-sensitive order when selecting actions.
%
Note that for policy \(\pol_{B}\) the state values are
\(\V^{\pol_{B}}((A,l)) = \V^{\pol_{B}}((A,r)) = -0.5\). Therefore, \(\pol_{A}\) is preferable.
However, after the decision for \(\pol_{A}\) is made, doing loop B once becomes attractive as well,
as the same amount of rewards are collected. This makes the bias values being equal under the given
policy.
%
% Clearly a thoughtful decision cannot be made.
Therefore, our algorithm has a further decision layer using an approximation of average adjusted
discounted state values with a smaller discount factor. As the error term is increased when
the discount factor decreases, the agent chooses the action that maximises the error term. The inferred values for a discount factor of \(0.8\) are
\(\X_{0.8}^{\pol_{A}}((A,l)) = 0.555\) and \(\X_{0.8}^{\pol_{A}}((A,r)) = 0.145\), and thus action
\(l\) is preferred over action \(r\).
% \end{example}

\begin{figure}[t!]
  \centering
  \begin{tikzpicture}[thin, scale=0.75]
  % Nodes
  \draw (-3,0) node(0) [circle,draw,minimum size=25] {\footnotesize \(0\)};
  \draw (0,0)  node(1) [circle,draw,minimum size=25] {\footnotesize \(1\)};
  \draw (3,0)  node(2) [circle,draw,minimum size=25] {\footnotesize \(2\)};

  % Edges
  \path[thin, ->, bend right, >=stealth] (1) edge[above] node {\footnotesize\(2\)} (0);
  \path[thin, ->, bend right, >=stealth] (2) edge[above] node {\footnotesize\(2\)} (1);
  \path[thin, ->, bend right, >=stealth] (0) edge[below] node {\footnotesize\(0\)} (1);
  \path[thin, ->, bend right, >=stealth] (1) edge[below] node {\footnotesize\(0\)} (2);

  \draw (-1.5,0) node[] { A };
  \draw (1.5,0) node[] { B };
\end{tikzpicture}

%%% Local Variables:
%%% mode: latex
%%% TeX-master: "../paper"
%%% End:
  \caption{\label{fig:three-states} A MDP with the only action choice in state \(1\) producing
    policies \(\pol_{A}\) (going left in 1) and \(\pol_{B}\) (going right) (adapted
    from~\cite{Mahadevan96_AverageRewardReinforcementLearningFoundationsAlgorithmsAndEmpiricalResults})}
\end{figure}
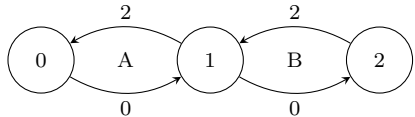

In the sequel we establish the method, provide the algorithm and present an optimality analysis
thereof. I.e.\@ we show that the presented algorithm is capable of inferring, so called,
Blackwell-optimal policies~\cite{Blackwell62}. Blackwell-optimal policies are policies that maximise
the average reward, the bias value and the error term, in this specific order.
Near-Blackwell-optimal algorithms infer bias-optimal policies for any MDP and for given MDPs can be
configured to infer Blackwell-optimal policies, where inaccuracies due to the limitations of
floating point representations of modern computer systems are neglected.

\section{The Laurent Series Expansion of Discounted State Values}%
\label{sec:The_Laurent_Series_Expansion_of_Discounted_State_Values}

% \MS[t]{TODO: This section introduces reinforcement learning in comparison to the discounted
% framework, elaborates on optimiality criteria and provides insights of the underlying algorithm.}

This section briefly introduces the needed formalism, then investigates the discounted RL framework
and provides the Laurent series expansion of its state values. The Laurent series expansion plays a
crucial role in the development of the newly established algorithm.

Like Miller and Veinott~\cite{MillerVeinott1969} we are considering problems that are observed in a
sequence of points in time labeled \(1,2,\ldots\) and can be modelled using a finite set of states
\(\States\), labelled \(1,2,\ldots,\size{\States}\), where the size \(\size{\States}\) is the number
of elements in \(\States\). At each point $t$ in time the system is in a state
\(s_{t} \in \States\). Further, by choosing an action $a_{t}$ of a finite set of possible actions
\(A_{s}\) the system returns a reward $r_{t} = r(s_{t}, a_{t})$ and transitions to another state
\(s_{t+1} \in \States\) at time \(t+1\) with conditional probability
\(p(s_{t+1}, r_{t} \mid s_{t}, a_{t})\). That is, we assume that reaching state \(s_{t+1}\) from
state \(s_{t}\) with reward \(r_{t}\) depends solely on the previous state \(s_{t}\) and chosen
action \(a_{t}\). In other words, we expect the system to possess the Markov property
\cite[p.63]{sutton1998introduction}. RL processes that possess the Markov property are referred to
as Markov decision processes (MDPs)~\cite[p.66]{sutton1998introduction}. A MDP is called
\textit{episodic} if it includes terminal states, or \textit{non-episodic}
otherwise~\cite[p.58]{sutton1998introduction}. \textit{Terminal states} are absorbing states and are followed
by a reset of the system, which starts a new episode.

Thus, the action space is defined as \(F = \times_{s=1}^{\size{\States}} A_{s}\), where \(A_{s}\) is
a finite set of possible actions. A \emph{policy} is a sequence \(\pol = (f_{1},f_{2},\ldots)\) of
elements \(f_{t} \in F\). Using the policy \(\pol\) means that if the system is in state \(s\) at
time \(t\) the action \(f_{t}(s)\), i.e. the \(s\)-th component of \(f_{t}\), is chosen. A
stationary policy \(\pol = (f,f,\ldots)\) does not depend on time. In the sequel we are concerned
with stationary policies only.
An \textit{ergodic} MDP consists of a single set of recurrent states under all stationary policies,
that is, all states are revisited with probability
\(1\)~\cite{Mahadevan96_AverageRewardReinforcementLearningFoundationsAlgorithmsAndEmpiricalResults}.
A MDP is termed \textit{unichain} if under all stationary policies the transition matrix contains a
single set of recurrent states and a possible empty set of transient states. A MDP is
\textit{multichain} if there exists a policy with at least two recurrent
classes~\cite{Mahadevan96_AverageRewardReinforcementLearningFoundationsAlgorithmsAndEmpiricalResults}.
Finally, a state is termed \textit{periodic} if the greatest common divisor of all path lengths to
itself is greater than \(1\). Otherwise, it is called aperiodic.
%
% Thus the goal in discounted reinforcement learning (RL) is to find
% the optimal stationary policy \(\pol\) for the given parameterisation and its underlying MDP problem
% defined by the state and action spaces, as well as the reward function.
%
The goal in RL is to find the optimal stationary policy \(\pol^{\star}\) for
the underlying MDP problem defined by the state and action spaces, as well as the reward function.

\subsection{Discounted Reinforcement Learning}
\label{subsec:Discounted_Reinforcement_Learning}

In the standard discounted framework the value of a state \(V_{\gamma}^{\pol_{\gamma}}(s)\) is
defined as the expected discounted sum of rewards under the stationary policy \(\pol_{\gamma}\) when
starting in state \(s\). That is,
\begin{align*}
  V_{\gamma}^{\pol_{\gamma}}(s) = \lim_{N \to \infty} E[\sum_{t=0}^{N-1} \gamma^{t} R_{t}^{\pol_{\gamma}}(s)]\tcom
\end{align*}

where \(0 \leqslant \gamma < 1\) is the discount factor and
\(R_{t}^{\pol_{\gamma}}(s) = \E_{\pol_{\gamma}}[ r(s_{t},a_{t}) \mid s_{t} = s, a_{t} = a]\) % \MS[t]{check if correct!}
the reward received at time \(t\) upon starting in state \(s\) by following policy \(\pol_{\gamma}\)
\cite[e.g.]{Mahadevan96_AverageRewardReinforcementLearningFoundationsAlgorithmsAndEmpiricalResults}.
The aim is to find an optimal policy \(\polopt_{\gamma}\), which when followed, maximises the state
value for all states \(s\) as compared to any other policy \(\pol_{\gamma}\):
\begin{align*}
  V_{\gamma}^{\polopt_{\gamma}} - V_{\gamma}^{\pol_{\gamma}} \geqslant 0\tpkt
\end{align*}

This criteria is usually referred to as \textit{discounted-optimality} (or
\(\mathit{\gamma}\)\textit{-optimality}) as the discount factor \(\gamma\) is
fixed~\cite{Mahadevan96_OptimalityCriteriaInReinforcementLearning}.
Note that most works omit the index \(\gamma\) in the policies \(\pol_{\gamma}, \polopt_{\gamma}\)
and thus incorrectly indicate that \(\polopt_{\gamma} = \polopt\), where \(\polopt\) is the optimal
policy for the underlying problem. For the rest of the paper we follow this convention and drop the
index \(\gamma\) of the policy for the discounted state value and the reward, thus
\(V_{\gamma}^{\pol_{\gamma}}(s) = V_{\gamma}^{\pol}(s)\), and
\(R_{t}^{\pol_{\gamma}}(s) = R_{t}^{\pol}(s)\).

This also means that the actual value set for \(\gamma\) defines the policy which is optimised for.
For instance, Figure~\ref{fig:printer} depicts a MDP with a single action choice in state \(1\).
Rewards of \(5\) or \(10\) respectively, are received once upon traversing from state \(5\) or
\(10'\) to state \(1\). In all other cases the reward is \(0\).
The only action choice is in state \(1\), in which the agent can choose between doing the
printer-loop or the mail-loop. Observe that the average reward received per step equals \(1\) for
the printer-loop and \(2\) for the mail-loop.
Thus, the Blackwell-optimal policy is to choose the mail-loop, as it maximises the returned reward.
However, if \(\gamma < 3^{-\frac{1}{5}} \approx 0.8027\) an agent using standard discounted
reinforcement learning selects the printer loop.
%
%
% the MDP modelled in Figure~\ref{fig:printer} loops infinitely many times,
Schartz~\cite{schwartz1993reinforcement} shows that for arbitrary large \(\gamma < 1\) it is
possible that standard discounted RL fails in finding the optimal policy, while Zhang and
Dietterich~\cite{zhang1995reinforcement} as well as Boyan~\cite{boyan1995generalization} use RL on
combinatorial optimization problems for which the choice of discount factor is crucial.

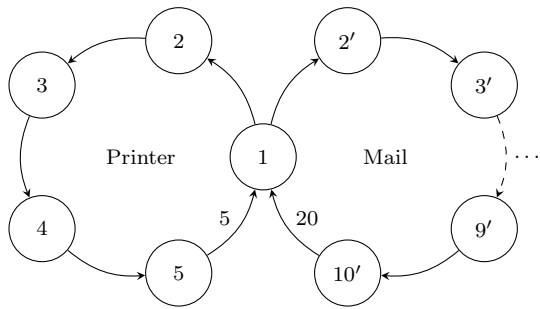
\begin{figure}[t!]
  \centering
  \begin{tikzpicture}[thin, scale=0.65]
  % Nodes
  \foreach \x in {1,...,5}{
    \draw ({cos(\x*72-72)*2.5},{sin(\x*72-72)*2.5}) node(\x) [circle,draw,minimum size=25] {\footnotesize \(\x\)};
  };
  \foreach[evaluate={
    \y=int(\x+5);
  }]  \x in {2,...,3}{
    \draw ({-cos(\x*72-72)*2.5+5},{sin(\x*72-72)*2.5}) node(\y) [circle,draw,minimum size=25]
    {\footnotesize \(\x'\)};
  };
  \foreach [evaluate={
    \y=int(\x+5);
    \z=int(\x+5);
  }] \x in {4,...,5}{
    \draw ({-cos(\x*72-72)*2.5+5},{sin(\x*72-72)*2.5}) node(\z) [circle,draw,minimum size=25]
    {\footnotesize \(\y'\)};
  };

  \newcommand\bend{22.5}
  % Edges
  \path[thin, ->, bend right=\bend, >=stealth] (1) edge[below left] node {\footnotesize\(\)} (2);
  \path[thin, ->, bend right=\bend, >=stealth] (2) edge[above]      node {\footnotesize\(\)} (3);
  \path[thin, ->, bend right=\bend, >=stealth] (3) edge[left ]      node {\footnotesize\(\)} (4);
  \path[thin, ->, bend right=\bend, >=stealth] (4) edge[above]      node {\footnotesize\(\)} (5);
  \path[thin, ->, bend right=\bend, >=stealth] (5) edge[above left] node[yshift=-2] {\footnotesize\(5\)} (1);

  \path[thin, ->, bend left=\bend, >=stealth]         (1)  edge[below right] node {\footnotesize\(\)} (7);
  \path[thin, ->, bend left=\bend, >=stealth]         (7)  edge[above right] node {\footnotesize\(\)} (8);
  \path[thin, ->, dashed, bend left=\bend, >=stealth] (8)  edge[right]       node {\(\ldots\)} (9) ;
  \path[thin, ->, bend left=\bend, >=stealth]         (9)  edge[above]       node {\footnotesize\(\)} (10);
  \path[thin, ->, bend left=\bend, >=stealth]         (10) edge[above right] node[yshift=-2] {\footnotesize\(20\)} (1);

  \draw (0,0) node[] { Printer };
  \draw (5,0) node[] { Mail };

\end{tikzpicture}

%%% Local Variables:
%%% mode: latex
%%% TeX-master: "../paper"
%%% End:
  \caption{\label{fig:printer} A MDP with two different definite policies (adapted and
    corrected\protect\footnotemark{} from
    Mahadevan~\cite{Mahadevan96_OptimalityCriteriaInReinforcementLearning})}
\end{figure}

\footnotetext{In~\cite{Mahadevan96_OptimalityCriteriaInReinforcementLearning} they claim that only
  for \(\gamma < 0.75\) the policy is sub-optimal.}

The idea behind the discount factor in standard discounted reinforcement learning is to prevent
infinite state values~\cite[p.59]{sutton1998introduction}. However,
Mahadevan~\cite{Mahadevan96_OptimalityCriteriaInReinforcementLearning,Mahadevan96_AverageRewardReinforcementLearningFoundationsAlgorithmsAndEmpiricalResults}
refers to the discounting approach as unsafe, as it encourages the agent to aim for short-term gains
over long-term benefits. This results from the fact that the impact of an action choice with
long-term reward decreases exponentially with time~\cite{ok1996auto}. % \MS[t]{These are two things:
% 1. discounted reward (reward importance is exponentially decreasing with time) vs 2. average
% reward is exponentially scaled.}
%
Besides bounding the state values another main idea behind the \(\gamma\)-parameter is to be able to
specify a balance between short-term (low \(\gamma\)-values) and long-term (high \(\gamma\)-values)
reward objectives% , which could also be seen as a chance of dying at any step
~\cite{schwartz1993reinforcement}.
But what seems to be an advantage rather becomes a disadvantage, as in almost all decision problems
the aim actually is to perform well over time, i.e.\@ to collect as much reward as possible over a
presumably infinite time horizon.
In terms of reward this means to seek for average reward maximising policies before more selectively
choosing actions, cf.\@ the example in Figure~\ref{fig:printer}. Therefore, in almost all RL studies
the discount factor \(\gamma\) is set to a value very close to (but strictly less than) \(1\), for
instance to \(0.99\)~\cite[e.g.]{mnih2015human,mnih2016asynchronous,Lillicrap15}. However, the issue
is that the state values increase exponentially as the average reward is multiplied by
\(1/(1-\gamma)\), while the bias value and error term are not. We will refer to the portion induced
by the exponentially scaled average reward as \textit{base level imposed by the average reward}.

\begin{example}
  To clarify, consider the gridworld MDP given in Figure~\ref{fig:grid} with states
  \(\mathcal{S} = \{ (x,y) \mid x,y \in \{0,1\} \}\), where \((0,0)\) is the goal state after which the agent is
  placed uniformly on the grid by using the only available action \textsf{random} (see left side).
  In all other states the agent can choose among the actions \textsf{up}, \textsf{right},
  \textsf{down}, \textsf{left}, which move the agent in the expected direction, except when moving
  out of the grid. In such cases the state is unchanged. The reward function is given on the right.
  Action \textsf{random} in state \((0,0)\) returns reward \(10\), while taking any other action
  provides a uniformly distributed reward of \(\Unif(0,8)\), where moving out of the grid adds a
  punishment of \(-1\).
  It is obvious that reaching state \((0,0)\) with as little steps as possible is optimal.
  As by definition the average reward for unichain MDPs is equal among all
  states~\cite{Mahadevan96_AverageRewardReinforcementLearningFoundationsAlgorithmsAndEmpiricalResults}.
  Thus the average reward for all states of the optimal (and greedy) policy is \(7\), which already
  imposes a base level imposed by the average reward of \(700\) in case of \(\gamma=0.99\).

  \begin{figure}[t!]
    \centering
    \begin{tikzpicture}[thin, scale=1.5]
  % Points
  \newcommand\maxX{1}         % = maxX
  \newcommand\maxY{1}         % = maxY
  \newcommand\goalX{0}         % = maxX
  \newcommand\goalY{0}         % = maxY

  \tikzmath{\maxXP = \maxX + 1; \maxYP = \maxY + 1;};

  \foreach \x in {0,...,\maxXP} {
    \foreach \y in {0,...,\maxYP} {
      \draw (\x,0) -- (\x,\maxYP) node[] {};
      \draw (0,\y) -- (\maxXP,\y) node[] {};
    }
  }
  \foreach \x in {0,...,\maxX} {
    \foreach \y in {0,...,\maxY} {
      \tikzmath{\xP = int(\maxX - \x); \yP = int(\maxY - \y);};
      \draw (\x+0.5, \y+0.5) node[] {(\yP,\x)};
      \draw (\x+0.5, \y+0.9) node[] (\yP,\x,up)                  {\ifthenelse{\x=\goalX \AND \yP=\goalY}{}{\tiny up   }};
      \draw (\x+0.5, \y+0.1) node[] (\yP,\x,down)                {\ifthenelse{\x=\goalX \AND \yP=\goalY}{}{\tiny down }};
      \draw (\x+0.1, \y+0.5) node[rotate=90] (\yP,\x,left)       {\ifthenelse{\x=\goalX \AND \yP=\goalY}{}{\tiny left }};
      \draw (\x+0.9, \y+0.5) node[rotate=90] (\yP,\x,right)      {\ifthenelse{\x=\goalX \AND \yP=\goalY}{}{\tiny right}};
    }
  }
  \draw (\goalX+0.5, \maxY-\goalY+0.3) node[] (\goalX,\goalY,rand) {\tiny random};

  % Name
  \draw (\maxXP/2, 0) node[below,minimum height=0.75cm] { \small State/Action Space};

\end{tikzpicture}
\begin{tikzpicture}[thin, scale=1.5]
  % Points
  \newcommand\maxX{1}         % = maxX
  \newcommand\maxY{1}         % = maxY
  \newcommand\goalX{0}         % = maxX
  \newcommand\goalY{0}         % = maxY

  \tikzmath{\maxXP = \maxX + 1; \maxYP = \maxY + 1;};

  \foreach \x in {0,...,\maxXP} {
    \foreach \y in {0,...,\maxYP} {
      \draw (\x,0) -- (\x,\maxYP) node[] {};
      \draw (0,\y) -- (\maxXP,\y) node[] {};
    }
  }
  \foreach \x in {0,...,\maxX} {
    \foreach \y in {0,...,\maxY} {
      \tikzmath{\xP = int(\maxX - \x); \yP = int(\maxY - \y);};
      \draw (\x+0.5, \y+0.5) node[] {(\yP,\x)};
      \draw (\x+0.5, \y+0.9) node[] (\yP,\x,up)                  {\ifthenelse{\x=\goalX \AND \yP=\goalY}{}{\tiny \(\Unif(0,8)\)\ifthenelse{\yP=0    }{\(-1\)}{} }};
      \draw (\x+0.5, \y+0.1) node[] (\yP,\x,down)                {\ifthenelse{\x=\goalX \AND \yP=\goalY}{}{\tiny \(\Unif(0,8)\)\ifthenelse{\yP=\maxY}{\(-1\)}{} }};
      \draw (\x+0.1, \y+0.5) node[rotate=90] (\yP,\x,left)       {\ifthenelse{\x=\goalX \AND \yP=\goalY}{}{\tiny \(\Unif(0,8)\)\ifthenelse{\x=0     }{\(-1\)}{} }};
      \draw (\x+0.9, \y+0.5) node[rotate=90] (\yP,\x,right)      {\ifthenelse{\x=\goalX \AND \yP=\goalY}{}{\tiny \(\Unif(0,8)\)\ifthenelse{\x=\maxX }{\(-1\)}{} }};
    }
  }
  \draw (\goalX+0.5, \maxY-\goalY+0.3) node[] (\goalX,\goalY,rand) {\tiny 10};

  % Name
  \draw (\maxXP/2, 0) node[below,minimum height=0.75cm] { \small Reward Function};

\end{tikzpicture}

%%% Local Variables:
%%% mode: latex
%%% TeX-master: "../paper"
%%% End:
    \caption{\label{fig:grid} A gridworld MDP with \(4\) states and a set of \(5\) actions
      and the reward function
    }
  \end{figure}
\end{example}

For episodic MDPs standard discounted RL is unable to evaluate the average reward of terminal states
and its predecessors correctly, as there is no consecutive state at the end of an episode, i.e.\@
for the gridworld example state \((0,0)\) is evaluated with value \(10\), when a new episode starts
upon reaching the goal.

Furthermore, for both, episodic and non-episodic MDPs, there is another major issue. As each state
is assessed separately, so is the average reward of that state. Thus, states that are visited more
often in the iterative process of shifting all state values to the base level imposed by the average
reward % and the selected \(\gamma\)-value
are evaluated with a higher average reward, which however, increases the likeliness that the agent
will visit the state again. This behaviour forms clusters with cyclic paths of states (e.g.\@ going
back and fourth between states \((1,0)\) and \((1,1)\)), which are visited more and more likely.
Even setting a very high exploration rate is usually no remedy by the same argument. Additionally,
recall that the average reward increases once the policy gets better through decreasing the
exploration rate. This information then needs to be traversed to all states, which however, requires
high exploration. This implies that finding the correct hyperparameter setting (e.g.\@ exploration
decay) needs a huge amount of effort and experience, which is exactly what was reported
in~\cite{gijsbrechts2018can}.
% to weak:
%
% Although function approximations, like neural networks, are capable of changing multiple values at
% the same time, we observed the same effects and thus far-from-optimal policies.
Regardless of that, this behaviour increases the number of required learning steps tremendously, as
all states have to be adapted every time a policy change imposes a change in the average reward (see
also~\cite{Mahadevan96_OptimalityCriteriaInReinforcementLearning,Mahadevan96_AverageRewardReinforcementLearningFoundationsAlgorithmsAndEmpiricalResults,schwartz1993reinforcement}).
Unfortunately this easily leads to aforementioned clusters.

It is obvious that optimal policies are very hard to obtain with all
these difficulties, especially as all these effects complicate the
parameterisation tremendously.
%% often leading to frustration and
%% endless attempts in adapting the reward function to somehow escape
%% these effects.
%
However, recall that in operations research and many other real-world
applications, success is constantly measured. It is easy to see that
this corresponds directly to the problem structure of the gridworld
example given in Figure~\ref{fig:grid}. Unfortunately this also means
that all these issues directly exist in such applications, which also
explains the small number of works combining RL
and operations research problems.

Therefore, in the sequel we present a more refined RL approach for operations research, which
overcomes these issues by separately assessing the average reward.
The relation of the computed values of these two approaches, that is, the aformentioned discussed
standard discounted RL technique and the in this paper developed average reward adjusted discounted
RL (\ARA{}) method, are described by following established Laurent series expansion.

\subsection{The Laurent Series Expansion of Discounted State Values}
\label{subsec:The_Laurent_Series_Expansion_of_Discounted_State_Values}

% cite author
% \cite{MillerVeinott1969} in conjunction with \cite{Puterman94} provide important insights into
% discounted reinforcement learning.
The Laurent series expansion of the discounted state values~\cite{MillerVeinott1969,Puterman94}
provide important insights by giving rise to basically three addends. For a given discount factor
\(\gamma\) the first addend is solely determined by the average reward\footnote{Please note that we
  assume a positive average reward and the objective to maximise the returned reward throughout this
  work.}, the second is the bias value and the third one, actually consisting of infinitely many
sub-terms, is the error term.
%% The Laurent series expansion is of high importance for the
%% development of average reward adjusted discount reinforcement
%% learning.
In the sequel we present the definitions of the average reward and bias value, before providing the
Laurent series expansion.

\begin{definition}
  Due to Howard~\cite{howard1960dynamic} for an aperiodic\footnote{In the periodic case the Cesaro
    limit of degree \(1\) is required to ensure stationary state transition probabilities and thus
    stationary values~\cite{Puterman94}. Therefore to ease readability we concentrate on unichain
    and aperiodic MDPs. However, the theory directly applies to period unichain MDPs by replacing
    the limits accordingly.} MDP the \textit{gain} or \textit{average reward} \(\avgrew^{\pol}(s)\)
  of a policy \(\pol\) and a starting state \(s\) is defined as
  \begin{align*}
    \avgrew^{\pol}(s) = \lim_{N \to \infty} \frac{\E [\sum_{t=0}^{N-1}R_{t}^{\pol}(s)]}{N}\tcom
  \end{align*}
  where \(R_{t}^{\pol}(s) = \E_{\pol}[ r(s_{t},a_{t}) \mid s_{t} = s, a_{t} = a]\) is the reward received at time \(t\), starting in state \(s\) and
  following policy \(\pol\).
\end{definition}

Clearly \(\avgrew^{\pol}(s)\) expresses the expected average reward received per action taken
when starting in state \(s\) and following policy \(\pol\). In the common case of unichain MDPs, in
which only a single set of recurrent states exists, the average reward \(\avgrew^{\pol}(s)\) is
equal for all states \(s\)
\cite{Mahadevan96_AverageRewardReinforcementLearningFoundationsAlgorithmsAndEmpiricalResults,Puterman94}.
Thus, in the sequel we may simply refer to it as \(\avgrew^{\pol}\).

\begin{definition}
  For an aperiodic MDP problem the average adjusted sum of rewards or
  \textit{bias value} is defined as
  \begin{align*}
    V^{\pol}(s) = \lim_{N \to \infty}{ \E [ \sum_{t=0}^{N-1}(R_{t}^{\pol}(s) - \avgrew^{\pol}(s) )]}\tcom
  \end{align*}
  where again \(R_{t}^{\pol}(s)\) is the reward received at time \(t\), starting in state \(s\) and
  following policy \(\pol\).
\end{definition}

Note that the bias values are bounded due to the subtraction of the
average reward. Thus the bias value can be seen as the rewards that additionally sum up in case the
process starts in state \(s\).

% Repetition:
%
% \begin{example}
%   Reconsider the MDP depicted in Figure~\ref{fig:three-states}. % Both actions (going left or right)
%   %   in state \(1\) have the same average reward of \(\avgrew^{\pol_{A}} = \avgrew^{\pol_{B}} = 1\).
%   Only taking the A-loop is bias-optimal, as its policy \(\pol_{A}\) leads to bias values
%   \(V^{\pol_{A}}(0)=-0.5\), \(V^{\pol_{A}}(1)=0.5\) and \(V^{\pol_{A}}(2)=1.5\), while policy
%   \(\pol_{B}\) which selects the B-loop generates bias values
%   \(V^{\pol_{B}}(0)=-1.5\), \(V^{\pol_{B}}(1)=-0.5\) and \(V^{\pol_{B}}(2)=0.5\).
% \end{example}

Finally, a state value \(V_{\gamma}^{\pol}(s)\) in standard discounted reinforcement learning can be
decomposed in its average reward, the bias value and an additional error term, which actually
consists of infinitely many subterms.

\begin{definition}
  Due to Miller and Veinott~\cite{MillerVeinott1969} the \textit{Laurent Series Expansion of the
    discounted state value} for a state \(s\), a discount factor \(\gamma\) and a policy \(\pol\) is
  given by
  \begin{align}
    \label{eq:laurent}
    V_{\gamma}^{\pol}(s) = \frac{\avgrew^{\pol}(s)}{1-\gamma} + V^{\pol}(s) + e_{\gamma}^{\pol}(s) \tcom
  \end{align}
  where Puterman~\cite[p.341]{Puterman94} shows that \(\lim_{\gamma \to 1} e_{\gamma}^{\pol}(s) = 0\).
\end{definition}

The error term \(e_{\gamma}^{\pol}(s)\) incorporates what amount of reward is collected in
combination with the number of steps until it is collected. The higher the discount factor the more
long-sighted the agent is.
Note how the first term depending on the average reward \(\avgrew^{\pol}(s)\) converges to
infinity as \(\gamma\) increases towards \(1\) and that the second addend does not depend on the
discount factor. If the average reward is non-zero and for large \(\gamma\)-values we can assume
\(\avgrew^{\pol}(s)/(1-\gamma) \gg V^{\pol}(s) + e_{\gamma}^{\pol}(s)\) which explains the behaviour
of the standard discounted RL agent, cf.\@ the example of Figure~\ref{fig:grid}. Regardless of the
quality of the chosen actions all state-values need to iteratively increase from the starting state
(usually \(0\)) to the base level imposed by the average reward \(\avgrew^{\pol}(s)/(1-\gamma)\)
offset by \(V^{\pol}(s) + e_{\gamma}^{\pol}(s)\). As usually there are more actions which are
sub-optimal in comparison to the number of optimal ones the agent will more likely choose such an
action in usually applied tabula rasa learning. Thus there is a high chance that cycles form.

% \MS[t]{HERE? TODO}
% In case this set includes more than one possible action, the agent chooses actions according to
% additionally learned average reward adjusted discounted state values where a smaller discount factor
% was used. As \(\lim_{\gamma \to 1} e_{\gamma}^{\pol}(s) = 0\) the agent is able to select the action
% that maximises $e_{\gamma}^{\pol}(s)$, which is the Blackwell-optimal action.

% However, (i) in more complex, i.e. real-world, MDPs this value can be arbitrary close to \(1\) and
% (ii) the difference in state values may be very small, which (iii) due to the need of state value
% function approximation likely causes errors when choosing among different actions in the standard
% discounted framework. real-world applications underlie much more complex MDPs as the ones of
% Figures~\ref{fig:printer} and~\ref{fig:three-states}. Thus \(\gamma\)-values even very close to
% \(1\) can lead to sub-optimal policies. To investigate these differences we will now present a
% more elective optimality criteria.
%
%

\begin{example}

  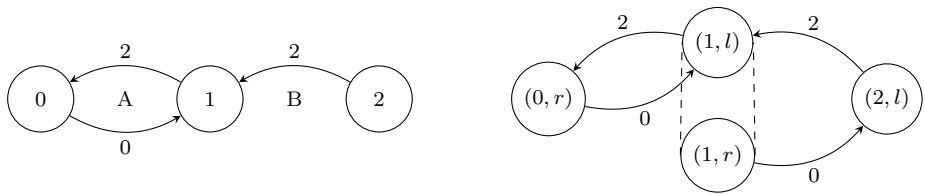
\begin{figure}[t!]
    \centering
    \begin{tikzpicture}[thin, scale=0.75]
  % Nodes
  \draw (-3,0) node(0) [circle,draw,minimum size=25] {\footnotesize \(0\)};
  \draw (0,0)  node(1) [circle,draw,minimum size=25] {\footnotesize \(1\)};
  \draw (3,0)  node(2) [circle,draw,minimum size=25] {\footnotesize \(2\)};

  % Edges
  \path[thin, ->, bend right, >=stealth] (1) edge[above] node {\footnotesize\(2\)} (0);
  \path[thin, ->, bend right, >=stealth] (2) edge[above] node {\footnotesize\(2\)} (1);
  \path[thin, ->, bend right, >=stealth] (0) edge[below] node {\footnotesize\(0\)} (1);
  % \path[thin, ->, bend right, >=stealth] (1) edge[below] node {\footnotesize\(0\)} (2);

  \draw (-1.5,0) node[] { A };
  \draw (1.5,0) node[] { B };

  \draw (6,0) node(B0) [circle,draw,minimum size=25] {\footnotesize \((0,r)\)};
  \draw (9,1.0)  node(B1l) [circle,draw,minimum size=25] {\footnotesize \((1,l)\)};
  \draw (9,-1.0)  node(B1r) [circle,draw,minimum size=25] {\footnotesize \((1,r)\)};
  \draw (12,0)  node(B2) [circle,draw,minimum size=25] {\footnotesize \((2,l)\)};

  \path[thin, ->, bend right, >=stealth] (B1l) edge[above] node {\footnotesize\(2\)} (B0);
  \path[thin, ->, bend right, >=stealth] (B2) edge[above] node {\footnotesize\(2\)} (B1l);
  \path[thin, ->, bend right, >=stealth] (B0) edge[below] node {\footnotesize\(0\)} (B1l);
  \path[thin, ->, bend right, >=stealth] (B1r) edge[below] node {\footnotesize\(0\)} (B2);

  \path[thin, dashed] (B1r.west) edge[below] node {} (B1l.west);
  \path[thin, dashed] (B1r.east) edge[below] node {} (B1l.east);

\end{tikzpicture}

%%% Local Variables:
%%% mode: latex
%%% TeX-master: "../paper"
%%% End:
    \caption{\label{fig:three-states2} The Blackwell-optimal policy \(\pol_{A}\) of the MDP of
      Figure~\ref{fig:three-states} as model-based (left) and model-free (right) version}
  \end{figure}

  Reconsider the task depicted in Figure~\ref{fig:three-states}. The (Blackwell-)optimal MDP
  \(\pol_{A}\), that is, the one that chooses the A-loop, is shown on the left side of
  Figure~\ref{fig:three-states2}.
  % Our model-based
  % implementation infers the values \(\avgrew^{\pol_{A}} = 1\), \(V^{\pol_{A}}(0)=-0.5\),
  % \(V^{\pol_{A}}(1)=0.5\) and \(V^{\pol_{A}}(2)=1.4391\) fully automatically in \(200k\) steps.
  % Furthermore, the addition of \(V^{\pol_{A}}(2) + \Psi^{\pol_{A}}_{V}(2) = 1.4998\), where
  % \(\Psi^{\pol_{A}}_{V}(2)\) is the exponentially smoothed deviation for state \(2\). Thus the
  % model-based algorithm is able to correctly approximate the mathematical solution.
  %
  %
  The right side of Figure~\ref{fig:three-states2} shows the same MDP for the model-free version. In
  model-free reinforcement learning state-action pairs as opposed to state values are estimated.
  %
  % The left side depicts the Blackwell-optimal MDP of Figure~\ref{fig:three-states}, that is, the MDP
  % that takes the A loop. On the right side is the model-free version of the same MDP with policy
  % \(\pol_{A}\).
  % Thus, state \(1\) is split into a state that takes the left loop and one that traverses to the
  % right. They are linked with dashed lines to indicate that these states are connected and the agent
  % has to decide for one among them. The bias values for the model-free version are
  % \(V^{\pol_{A}}(0,r) = -0.5\), \(V^{\pol_{A}}(1,l)=V^{\pol_{A}}(1,r)=0.5\), and
  % \(V^{\pol_{A}}(2,l)=1.5\).
  %
  %
  % Thus, each state is replicated as often as actions are available in that state. Therefore, state
  % \(1\) is replaced by states-action pairs \((1,l)\) and \((1,r)\) for moving to the left (taking
  % the A-loop) and to the right respectively.
  The dashed line indicates that these states are connected and thus the agent has to choose among
  them. In this case the bias values are \(V^{\pol_{A}}((0,r)) = -0.5\),
  \(V^{\pol_{A}}((1,l))=V^{\pol_{A}}((1,r))=0.5\), and \(V^{\pol_{A}}((2,l))=1.5\).
  %
  % What agent?
  %
  % Note that due to \(V^{\pol_{A}}((1,l))=V^{\pol_{A}}((1,r))\) the agent is unable to choose an
  % action based on the bias values.
  %% Therefore our algorithm infers the (Blackwell-)optimal action by using an
  %% additional estimation of \(\X_{\gamma_{0}}^{\pol}(s)\) which finds the actions that maximise the
  %% error term.

  When using standard discounted reinforcement learning, that is, estimating $V^{\pol}_{\gamma}(s)$
  the average reward of $1$ scales the state-action values. E.g.\@ for a discount factor of
  $\gamma=0.99$ the inferred values for a converged system are
  \(V_{0.99}^{\pol_{A}}((0,r)) = 99.497\), \(V_{0.99}^{\pol_{A}}((1,l))= 100.502\),
  \(V_{0.99}^{\pol_{A}}((1,r))=100.482\), and \(V_{0.99}^{\pol_{A}}((2,l))=101.497\). For all
  states the discounted state-value consists of the scaled average reward \(1/0.01 = 100\) and the
  bias value plus the error term.
  The duration of the process of learning the state values is unintentionally increased as the
  scaled average reward has to be learned in an iterative manner.%  and by every single state
  % separately
  Furthermore, the values itself are hard to interpret and the marginal difference
  between optimal and non-optimal action as compared to their actual values increase the likelihood
  of choosing sub-optimal actions, especially when function approximation is used to represent
  \(V_\gamma(s)\). In this example the difference of the state values for choosing action \(l\) over
  action \(r\) in state \(1\) is given by \(0.02\) as compared to their mean state value of
  \(100.492\).
\end{example}

% \MS[t]{TODO: adapt or delete} To illustrate the idea reconsider Figure~\ref{fig:printer}. First the
% agent picks the policy according to the highest average reward. Thus, all actions are assessed by an
% average reward of \(2\) which is inferred from the Mail-loop. This however, makes choosing the
% Printer-loop unattractive; i.e. the policy converges to the optimal policy of choosing the Mail
% loop.

\section{Average Reward Adjusted Discounted Reinforcement Learning}%
\label{sec:Average_Reward_Adjusted_Discounted_Reinforcement_Learning}

This section establishes the average reward adjusted discounted reinforcement learning algorithm.
Furthermore, we provide the Bellman Equations and an optimality discussion of the presented
algorithm.
For the rest of the paper we assume unichain MDPs, that is, we restrict our method to MDPs that
posses a scalar average reward value \(\avgrew^{\pol}\). In case of multichain MDPs we refer to the
companion paper \textit{Near-Blackwell-Optimal Average Reward Reinforcement
  Learning}\footnote{Working paper.}.% ~\cite{Schneckenreither2020}

Starting from the Bellman Equations we derive the average reward adjusted reinforcement learning
equation using the Laurent series expansion provided in Equation~\ref{eq:laurent}. However, let us
first introduce a notion for the state values which are adjusted of the average reward.

\begin{definition}
  We define the \textit{average reward adjusted discounted state value} $\X_{\gamma}^{\pol}(s)$ of a state $s$
  under policy $\pol$ and with discount factor $0 \leqslant \gamma \leqslant 1$ as 
  \begin{align*}
    \X_{\gamma}^{\pol}(s) \defsym V^{\pol}(s) + e_{\gamma}^{\pol}(s)\tpkt
  \end{align*}
  This can be reformulated to
  \(\X_{\gamma}^{\pol}(s) = V_{\gamma}^{\pol}(s) - \frac{\avgrew^{\pol}}{1-\gamma} = \lim_{N \to
    \infty} E[\sum_{t=0}^{N-1} \gamma^{t} R_{t}^{\pol}(s)] - \frac{\avgrew^{\pol}}{1-\gamma}\), thus
  our definition is a reformulation of the average-adjusted reward values of
  Schwartz~\cite{schwartz1993reinforcement,schwartz1993thinking}.
\end{definition}

A major problem occurring at average reward RL is that the bias values are not uniquely defined
without solving the first set of constraints defined by the error term addends
(see~\cite[p.346]{Puterman94,Mahadevan96_SensitiveDiscountOptimalityUnifyingDiscountedAndAverageRewardReinforcementLearning}).
We could overcome this issue by simply requiring \(\gamma\) to be strictly less than \(1\).
However, actually our algorithm does not require the exact solution for \(V^{\pol}(s)\), but a
solution which is offset suffices. Clearly this observation reduces the required iteration steps
tremendously as finding the exact solution, especially for large discount factors, is tedious.
Therefore, we allow to set \(\gamma = 1\), which induces \(\X_{\gamma}^{\pol}(s) = V^{\pol}(s)+u\),
where \(u\) is for unichain MDPs a scalar value independent of \(s\), i.e.\@ equivalent for all
states of the MDP~\cite[p.346]{Puterman94}.
%
%% Nonetheless, we require \(\gamma < 1\) to stabilise the process.
%
If we are interested in correct bias values, i.e. \(\gamma\) is close but strictly less than \(1\),
our approach is a tremendous advantage over average reward RL as it reduces the number of iterative
learning steps by requiring only a single constraint per state plus one for the scalar average
reward value. That is, for an MDP with \(N\) states only one more constraint (\(N+1)\) has to be
solved in \ARA{} as compared to (at least) \(2N+1\) nested constraints for average reward RL.
Therefore, it is cheap to compute \(\X_{\gamma}^{\pol}(s)\), while it is rather expensive to find
the correct values of \(V^{\pol}(s)\) directly, especially in an iterative manner as RL is.

% Furthermore,
% for large values of \(\gamma\) we know that \(e_{\gamma}^{\pol}(s) \approx 0\), thus \(\X_{\gamma}\)
% is uniquely defined.

\subsection{Bellman Equations}

Using the above notion we are able to derive the average reward adjusted discount reinforcement
learning state balance equation. To do so we make use of an equivalence found in the derivation of
the Bellman equation for \(V_{\gamma}^{\pol}(s)\) (see e.g.~\cite[p.70]{sutton1998introduction}) and
transform it to our needs. As in~\cite[p.66]{sutton1998introduction} we use the notation
\(R_{s,s'}^{a} = \E[r_{t} \mid s_{t}=s, a_{t} = a, s_{t+1} = s']\) for the expected reward to
receive when traversing from any current state \(s\) to state \(s'\) using action \(a\) and
\(\pol(a \mid s)\) the probability of taking action \(a\) in state \(s\) as given by policy \(\pol\).
\begin{align*}
  V_{\gamma}^{\pol}(s) & = \E_{\pol}[r_{t} + \gamma V_{\gamma}^{\pol}(s_{t+1})   \mid   s_{t} = s] \\
  \frac{\avgrew^{\pol}}{1-\gamma} + V^{\pol}(s) + e_{\gamma}^{\pol}(s) & = \E_{\pol}[r_{t} + \gamma ( \frac{\avgrew^{\pol}}{1-\gamma} + V^{\pol}(s_{t+1}) + e_{\gamma}^{\pol}(s_{t+1}))   \mid   s_{t} = s]\\
  % V^{\pol}(s) + e_{\gamma}^{\pol}(s) & = \E_{\pol}[r_{t} + \gamma ( V^{\pol}(s_{t+1}) + e_{\gamma}^{\pol}(s_{t+1})) +  \avgrew^{\pol} \cdot \frac{\gamma - 1}{1-\gamma}  \mid   s_{t} = s]\\
  \X_{\gamma}^{\pol}(s) & = \E_{\pol}[r_{t} + \gamma \X_{\gamma}^{\pol}(s_{t+1}) - \avgrew^{\pol} \mid   s_{t} = s]\\
  \X_{\gamma}^{\pol}(s) & = \sum_{a}\pol(a \mid s) \sum_{s'} p(s' \mid s_{t}=s, a_{t}=a) [R_{s,s'}^{a} + \gamma \X_{\gamma}^{\pol}(s') - \avgrew^{\pol}]
\end{align*}
Thus, we can compute the average reward adjusted discounted state value \(\X_{\gamma}^{\pol}(s)\) of
a state \(s\) by the returned reward, the adjusted discounted state value
\(\X_{\gamma}^{\pol}(s_{t+1})\) of the next state \(s_{t+1}\) and the average reward
\(\avgrew^{\pol}\). This is very similar to the Bellman equation of standard discounted RL, cf.\@
the first line.
Further, note that in line three we use the equivalence
$\avgrew^{\pol}(s) = \E_{\pol}[\avgrew^{\pol}(s)]$ described by the first addend of the Laurent
series expansion (see~\cite{MillerVeinott1969}).

% \MS{Explanation}

In the same manner we derive the Bellman optimality equation for average reward adjusted discounted
reinforcement learning, cf.~\cite[p.76]{sutton1998introduction}.
\begin{align*}
  V_{\gamma}^{\polopt}(s) & = \max_{a} \E_{\polopt}[r_{t} + \gamma V_{\gamma}^{\polopt}(s_{t+1})   \mid   s_{t} = s] \\
  \frac{\avgrew^{\polopt}}{1-\gamma} + V^{\polopt}(s) + e_{\gamma}^{\polopt}(s) & = \max_{a} \E_{\polopt}[r_{t} + \gamma ( \frac{\avgrew^{\polopt}}{1-\gamma} + V^{\polopt}(s) + e_{\gamma}^{\polopt}(s))   \mid   s_{t} = s]\\
  % V^{\polopt}(s) + e_{\gamma}^{\polopt}(s) & = \max_{a} \E_{\polopt}[r_{t} + \gamma ( V^{\polopt}(s) + e_{\gamma}^{\polopt}(s)) + \avgrew^{\pol} \cdot \frac{\gamma - 1}{1-\gamma} \mid   s_{t} = s]\\
  \X_{\gamma}^{\polopt}(s) & = \max_{a} \E_{\polopt}[r_{t} + \gamma \X_{\gamma}^{\polopt}(s) - \avgrew^{\polopt}  \mid   s_{t} = s]\\
  \X_{\gamma}^{\polopt}(s) & = \max_{a} \sum_{s'}p(s' \mid s_{t}=s, a_{t}=a) [R_{s,s'}^{a} + \gamma \X_{\gamma}^{\polopt}(s') - \avgrew^{\polopt}]
\end{align*}
Again like in standard discounted RL, also in \ARA{} the value for the optimal policy equals the
expected value of the best action from that state~\cite[p.76]{sutton1998introduction}, where the
average reward is subtracted accordingly. This is very pleasing as we can use the same ideas in
terms of exploration/exploitation and state value acquisition as known from standard discounted
RL\@. % Left of dot means use end of sentence spacing

\subsection{Near-Blackwell-Optimal Algorithm}
\label{subsec:Algorithm}

The reinforcement learning algorithm is depicted in Algorithm~\ref{alg:near}. In model-free methods,
which is what we aim for, state-action pairs are computed instead of state values only. Therefore,
the algorithm operates on state-action tuples, where for simplicity we write
\(\X^\pol_{\gamma}(s,a)\) instead of \(\X^\pol_{\gamma}((s,a))\) and the Bellman optimality equation
is adopted as in~\cite[p.76]{sutton1998introduction}, s.t.\@ the agent is able to selectively choose
actions among multiple states, cf. Figure~\ref{fig:three-states2}.
After initialising all values the agent enters the loop in which the first task is to choose an
action (step 3). In this action selection process we utilise an \(\epsilon\)-sensitive lexicographic
order \(\lexeq\) defined as \((a_{1},\ldots,a_{n}) = a \lexeq b = (b_{1},\ldots,b_{n})\) if and
only if \(| a_{j} - b_{j} | \leqslant \epsilon\) for all \(j < i\) and
\(|a_{i} - b_{i}| > \epsilon\). Note that the resulting sets of actions may not be disjoint.
Although this is an unusual order in programming, finding the set of maximizing values as in our
algorithm is straightforward and thus cheap to compute. In case the resulting set of actions
contains more than one element a random action of this set of actions is chosen.
% After initialising all values the agent enters the loop in which the first task is to choose an
% action (step 3). In this action selection process we utilise an \(\epsilon\)-sensitive order
% \(\lexeq\) defined as \(a \lexeq b\) if and only if \(|a - b| > \epsilon\). Note that the resulting
% sets of actions may not be disjoint. Although this is an unusual order in programming, finding the
% set of maximizing values as in our algorithm is straightforward and thus cheap to compute. In case
% the resulting set of actions contains more than one element a random action of this set of actions
% is chosen.

\begin{algorithm}[t!]
  \begin{algorithmic}[1]
    \State{}Initialize state \(s_{0}$, $\avgrew^{\pol} = 0\), \(\X_{\cdot}^{\pol}(\cdot, \cdot) = 0\), set an
    exploration rate \(0 \leqslant
    p_{exp} \leqslant 1\), exponential smoothing learning rates \(0 < \alpha, \gamma < 1\), and
    discount factors \(0.5 \leqslant \gamma_{0} < \gamma_{1} \leqslant 1\), where \(\gamma_{1} = 1\) is
    usually a good choice.
    \While{the stopping criterion is not fulfilled}
    \State{}\begin{minipage}[t]{0.9\textwidth} With probability \(p_{exp}\) choose a random action
      and probability \(1-p_{exp}\) one that fulfills \newline
      \(\max_{a}\lexeq(\X^{\pol}_{\gamma_{1}}(s_{t},a),\X^\pol_ {\gamma_{0}}(s_{t},a))\).
    \end{minipage}
    \State{}Carry out action \(a_{t}\), observe reward \(r_{t}\) and resulting state \(s_{t+1}\).
    \If{a non-random action was chosen}
    \begin{align*}
      \avgrew^{\pol}  \gets (1- \alpha) \avgrew^{\pol} + \alpha [r_{t} + \max_{a}\X^\pol_{\gamma_1}(s_{t+1},a) - \X^\pol_{\gamma_1}(s_{t},a_{t})]
    \end{align*}
    \EndIf
    \State{}Update the average reward adjusted discounted state-values.
    \begin{align*}
      \X^\pol_{\gamma_0}(s_{t},a_{t}) & \gets (1-\gamma) \X^\pol_{\gamma_0}(s_{t},a_{t}) + \gamma [r_{t} + \gamma_{0} \max_{a} \X^\pol_{\gamma_0}(s_{t+1},a) - \avgrew^{\pol}] \\
      \X^\pol_{\gamma_1}(s_{t},a_{t}) & \gets (1-\gamma) \X^\pol_{\gamma_1}(s_{t},a_{t}) + \gamma [r_{t} + \gamma_{1} \max_{a} \X^\pol_{\gamma_1}(s_{t+1},a) - \avgrew^{\pol}]
    \end{align*}
    \State{} Set \(s \gets s'\), \(t \gets t+1\) and decay parameters
    \EndWhile{}
  \end{algorithmic}
  \caption{\label{alg:near}Model-free tabular near-Blackwell-optimal RL algorithm for unichain MDPs}
\end{algorithm}

As usual in reinforcement learning the state-values are exponentially
smoothed using parameters \(\alpha\) and \(\gamma\). In step 4 the
chosen action is carried out, while in step 5 the average reward
estimate is calculated.
%% As in this paper we concentrate on unichain
%% MDPs the average reward \(\avgrew^{\pol}\) is independent of the
%% state-action pair and thus learned as scalar value.
%
As we aim for an estimate of \(\avgrew^{\polopt}\) we only update the value in case the
greedy action was chosen.
%
% As a stable estimate of the average reward is crucial for the algorithm, we use the
% simple moving average of the immediate reward received over the last $N$ periods.
The formula used is a reformulation of the second addend of the Laurent series
expansion~(see~\cite{MillerVeinott1969} or~\cite[p.346]{Puterman94} for details):
\(\avgrew^{\pol}(s) + V^{\pol}(s) - E[V^{\pol}(s)] = R_{t}(s)\).
Like~\cite{Tadepalli98_ModelbasedAverageRewardReinforcementLearning} we also observed that the
average reward has to be updated by this formula and not by exponentially smoothing the actual
observed rewards as done in the algorithms of
Mahadevan~\cite{Mahadevan96_AnAveragerewardReinforcementLearningAlgorithmForComputingBiasoptimalPolicies,Mahadevan96_SensitiveDiscountOptimalityUnifyingDiscountedAndAverageRewardReinforcementLearning},
which likely leads to sub-optimal policies. Furthermore, we adapted the above given algorithm by
adding an exponentially smoothed\footnote{With rate \(\frac{1}{50}\) and update of \(97.5\%\) of the
  current reward in every period.} bound from below for the average reward value. The idea is that,
once a policy was established for some time, it does not make sense to aim for policies with smaller
average reward.

Finally the state-values are updated according to the average reward adjusted Bellman optimality
equation derived above (step 6) and the environment is updated to the next state and time period
(step 7).
%% It may surprise that we estimate average reward adjusted discounted state value twice,
%% with different discount factors. The idea here is that the additional estimation using discount
%% factor \(\gamma_{0}\) allows a more refined action selection, cf.\@ the \(\epsilon\)-sensitive action
%% selection procedure.
In the next subsection we will introduce further optimality criteria of RL and prove that for a
given MDP the discount factors \(\gamma_{0}\) and \(\gamma_{1}\) and the comparison measure
\(\epsilon\) can be chosen s.t.\@ under the assumption of correctly approximated values the
algorithm produces Blackwell-optimal policies.

% Furthermore, we present a slightly adapted
% variant that deduces Blackwell-optimal policies and proof its optimality.

% The third component, \(\Delta V^\pol_\gamma(s,a))\) is used to estimate the slope of the error
% term \(e_{\gamma}^{\pol}(s,a)\), which can be either positive or negative. Therefore, there is the
% need of a distinction of which actions to maximise the discounted state value difference
% \(\Delta V^{\pol}_{\gamma}(s,a) = V^\pol_{\gamma_1}(s,a) - V^\pol_{\gamma_0}(s,a)\) is required.
% As the aim is to collect rewards as soon as possible, a Blackwell-optimal reinforcement learning
% agents maximises the error term, thus any actions converging to
% \(\frac{\avgrew^{\pol}(s,a)}{1-\gamma_{0}} + \V^{\pol}(s,a)\) from above are preferred over those
% that converge from below. Within these sets the action which maximises
% \(\Delta V^{\pol}_{\gamma}(s,a)\) is selected. If multiple such actions the agent can choose
% randomly among these.

\subsection{Optimality Criteria}
\label{subsec:Optimality_Criteria}

In terms of optimality, we consider the notion of \(n\)-discount optimality as it is the broadest
approach of optimality criteria in reinforcement learning, and further for a sufficiently large
\(n\) it is known that there always exists a policy which is optimal~\cite{Blackwell62,Veinott69}. For a
comprehensive discussion of optimality criteria in reinforcement learning we refer
to~\cite{Mahadevan96_OptimalityCriteriaInReinforcementLearning}.

% \cite{Veinott69}
% The aforementioned and informally introduced notions of gain-optimality and bias-optimality can be
% generalised to \(n\)-discount-optimality and will now be defined formally.

\begin{definition}%
  \label{def:n-discount-optimality}
  Due to Veinott~\cite{Veinott69} for MDPs a policy \(\polopt\) is \emph{\(n\)-discount-optimal} for
  \(n=-1,0,1,\ldots\) for all states \(s \in \States\) with discount factor
  \(\gamma\) % and \(V_{\gamma}^{\pol}(s)\) being the value function as defined above
  if and only if
  \begin{align*}
    \lim_{\gamma \to 1}(1-\gamma)^{-n}\ (V_{\gamma}^{\polopt}(s) - V_{\gamma}^{\pol}(s)) \geqslant 0 \tpkt
  \end{align*}
\end{definition}

As a policy can only be \(m\)-discount optimal if it is \(n\)-discount-optimal for all
\(n < m\)~\cite{Puterman94,Veinott69}, this leads to the component-wise comparison when greedily
choosing actions, cf.\@ the action selection process of the algorithm.

\begin{definition}
  If a policy is \(\infty\)-discount-optimal then it is said to be
  \textit{Blackwell-optimal}~\cite{Blackwell62}.
\end{definition}

That is, Blackwell-optimal policies are the in the sense of \(n\)-discount-optimality the best
achievable policies that first optimise for the highest gain, as we have for \(n=-1\) a measure for
gain-optimality~\cite{Mahadevan96_AverageRewardReinforcementLearningFoundationsAlgorithmsAndEmpiricalResults},
then for \(n=0\) for
bias-optimality~\cite{Mahadevan96_AverageRewardReinforcementLearningFoundationsAlgorithmsAndEmpiricalResults},
and as we will see for \(n \geqslant 1\) it maximises for the greatest error term. For an agent that
either expects to have infinitely many time to collect rewards, or one that is unaware when the
system will halt, this is the most sensible approach.

There are two known possibilities to incorporate the expected reward to be collected in the future.
The first one is to use a single discounted value, while the other approach in general incorporates
solving of infinitely many constraints. The following definition separates these kinds of
algorithms.
\begin{definition}
  If an algorithm infers for any MDP bias-optimal policies and for a given MDP can in theory be
  configured to infer \(\infty\)-discount-optimal policies, but in practise this ability is
  naturally limited due to the finite accuracy of floating-point representation of modern computer
  systems, it is said to be \textit{near-Blackwell-optimal} under the given computer system. An
  according to a near-Blackwell-optimal algorithm inferred Blackwell-optimal policy is called
  near-Blackwell-optimal.
\end{definition}
This definition is of practical relevance, as it defines a group of algorithms that are by far less
computationally expensive in comparison to ones that solve infinitely many constraints, but are able
to deduce sufficiently optimal policies. To the best of our knowledge there is no Blackwell-optimal
algorithm that neither requires infinitely many constraints to be solved nor is restricted by the
floating point precision.

\subsection{Optimality Analysis}
\label{subsec:Optimality_Analysis}

In this section we will analyse the procedure of Algorithm~\ref{alg:near} by the above definitions of
\(n\)-discount-optimality, cf.\@ Definition~\ref{def:n-discount-optimality}. Thus, we will start with
\(n=-1\) and then proceed from there.

\subsubsection{\(\mathit{(-1)}\)-Discount-Optimality}
% \paragraph{\(\mathit{(-1)}\)-Discount-Optimality}
%
For the case of \(n=-1\) this leads to
gain-optimality~\cite{Mahadevan96_AverageRewardReinforcementLearningFoundationsAlgorithmsAndEmpiricalResults},
defined as \(\avgrew^{\polopt}(s) - \avgrew^{\pol}(s) \geqslant 0\) for all policies \(\pol\) and
states \(s \in \States\). This means that a \((-1)\)-discount-optimal agent puts its highest
priority to maximising for the greatest average reward. However, recall that in unichain MDPs the
average rewards is equal among all states. Thus, all bias values are assessed with the same average
reward, i.e.\@ the agent automatically maximises for the greatest gain.
Note that by definition, in case of a possibly falsely predetermined and fixed average reward value,
the bias values are estimated according to the given policy induced by the average reward.
Furthermore, if the average reward is fixed to a wrong value the bias values are similarly shifted
as in the standard discounted framework. Therefore, we highly recommend to infer the average reward
automatically as specified in the algorithm.

\begin{example}
  Reconsider the MDP of Figure~\ref{fig:printer} and the discount factor \(\gamma = 0.99\). If we
  fix the average reward to \(\avgrew^{\pol}_{\mathsf{fix}} = 1\) as opposed to the correct value of
  \(\avgrew^{\pol} = 2\), the algorithm infers values
  \(\X_{\mathsf{fix},\gamma}^{\pol}(s) = \frac{\avgrew^{\pol} -
    \avgrew_{\mathsf{fix}}^{\pol}}{1-\gamma} + \X_{\gamma}^{\pol}(s) = 100 + \X_{\gamma}^{\pol}(s)\)
  instead.
\end{example}

\subsubsection{\(\mathit{0}\)-Discount-Optimality}
% \paragraph{\(\mathit{0}\)-Discount-Optimality}
%
In the case of \(n=0\) the criteria of \(n\)-discount-optimality describes bias-optimality with
\(V^{\polopt}(s) - V^{\pol}(s) \geqslant 0\) for all policies \(\pol\) and states
\(s \in
\States\)~\cite{Mahadevan96_AverageRewardReinforcementLearningFoundationsAlgorithmsAndEmpiricalResults}.
Thus for the algorithm the first decision level is to maximise for the policy yielding the highest
bias values. As we have \(\lim_{\gamma \to 1} e_{\gamma}^{\pol}(s) = 0\), we know that according to
the chosen \(\epsilon\) for sufficiently large \(\gamma_{1}\) the agent selects the set of actions
that maximise \(V^{\pol}(s)\).

\begin{theorem}
  For a sufficiently large \(\gamma_{1}< 1\), where sufficiently large means that for all states
  \(s\) we have \(|e_{\gamma_{1}}^{\pol}(s)| \leqslant \epsilon\), i.e.\@ it depends on the
  parameter \(\epsilon\), a \(0\)-discount-optimal agent chooses an action among the set of possible
  % (\(-1\)-discount-optimal)
  actions that maximise \(\X_{\gamma_{1}}^{\pol}(s)\).
\end{theorem}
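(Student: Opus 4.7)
The plan is to exploit the identity $\X_{\gamma_1}^{\pol}(s,a) = V^{\pol}(s,a) + e_{\gamma_1}^{\pol}(s,a)$, which follows directly from the Laurent decomposition of Equation~\ref{eq:laurent} after subtracting the scaled gain addend, together with Puterman's limit $\lim_{\gamma \to 1} e_{\gamma}^{\pol}(s,a) = 0$. Because the state space, the per-state action sets, and the set of stationary deterministic policies are all finite, this pointwise convergence is automatically uniform: for any tolerance $\epsilon > 0$ one may select $\gamma_1 < 1$ so that $|e_{\gamma_1}^{\pol}(s,a)| \leqslant \epsilon$ simultaneously for every triple $(s,a,\pol)$. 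This will be my standing choice of $\gamma_1$, and it is precisely the ``sufficiently large'' qualifier in the theorem statement.

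I would then fix a state $s$ and two candidate actions $a^*$ and $a'$ and write
\[
\X_{\gamma_1}^{\pol}(s,a^*) - \X_{\gamma_1}^{\pol}(s,a') = \bigl(V^{\pol}(s,a^*) - V^{\pol}(s,a')\bigr) + \bigl(e_{\gamma_1}^{\pol}(s,a^*) - e_{\gamma_1}^{\pol}(s,a')\bigr),
\]
noting that the error-term contribution is bounded in absolute value by $2\epsilon$. From this two-sided estimate I derive two consequences: first, if $a^*$ is bias-optimal and $a'$ is strictly bias-suboptimal by more than $2\epsilon$, then the $\X$-gap strictly exceeds $\epsilon$, so $a'$ is excluded by the $\epsilon$-tolerant comparator $\lexeq$; second, every bias-optimal action is retained, since its $\X$-gap with the maximum is at most $2\epsilon$. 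Therefore the first-coordinate top tier selected by $\lexeq$ consists solely of actions whose bias is within $2\epsilon$ of the bias-maximum.

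The argument concludes by letting $\epsilon$ shrink below $\delta/2$, where $\delta$ is the smallest strictly positive bias gap $V^{\pol}(s,a^*) - V^{\pol}(s,a')$ taken over the finite collection of states, actions and deterministic policies (existence of such a $\delta>0$ being immediate from finiteness). With this choice, the $\X_{\gamma_1}$-maximising set coincides exactly with the bias-maximising set at every state, so the agent's action selection is bias-optimal, i.e.\ $0$-discount-optimal in the sense of Definition~\ref{def:n-discount-optimality}. The previously established $(-1)$-discount-optimality carries over automatically in the unichain setting, since the gain is constant across states.

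The main obstacle — and really the only non-trivial point — is the quantitative link between the algorithm's tolerance $\epsilon$ and the admissible range of $\gamma_1$. The pointwise statement $e_{\gamma_1} \to 0$ must be upgraded to a uniform bound and then calibrated against the minimum bias gap $\delta$, so that the slack in $\lexeq$ cannot accidentally promote a bias-suboptimal action past a bias-optimal one. Finiteness of the state, action and policy sets makes this calibration routine, but it is where the real content of the argument resides; everything else is algebra on the Laurent decomposition.
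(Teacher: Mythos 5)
Your proposal is correct and follows essentially the same route as the paper: cancel the gain term via \((-1)\)-discount-optimality, let the error term vanish as \(\gamma \to 1\) to reduce \(0\)-discount-optimality to bias-maximisation, and then use \(|e_{\gamma_{1}}^{\pol}(s)| \leqslant \epsilon\) together with the \(\epsilon\)-sensitive comparison to transfer this to \(\X_{\gamma_{1}}^{\pol}\). You are in fact more careful than the paper about calibrating \(\epsilon\) against the minimum bias gap \(\delta\) (the paper simply asserts that the claim follows from \(|V^{\pol}(s) - \X_{\gamma_{1}}^{\pol}(s)| \leqslant \epsilon\)); just note that since the error contribution to an \(\X\)-gap can be as large as \(2\epsilon\) while \(\lexeq\) only separates actions whose gap exceeds \(\epsilon\), the clean threshold is \(\epsilon < \delta/3\) rather than \(\delta/2\).
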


\begin{proof}
  Recall that \(\lim_{\gamma \to 1} e_{\gamma}^{\pol}(s) = 0\) and
  \(\avgrew^{\polopt} = \avgrew^{\pol}\). We have
  \begin{align*}
    \lim_{\gamma \to 1} (1-\gamma)^{0}  (V_{\gamma}^{\polopt}(s) - V_{\gamma}^{\pol}(s)) \geqslant 0\\
    \lim_{\gamma \to 1} (\frac{\avgrew^{\polopt} - \avgrew^{\pol}}{1-\gamma} + V^{\polopt}(s) - V^{\pol}(s) + e_{\gamma}^{\polopt}(s) - e_{\gamma}^{\pol}(s)) \geqslant 0\\
    % \lim_{\gamma \to 1} (\frac{0}{1-\gamma} + V^{\polopt}(s) - V^{\pol}(s)) \geqslant 0\\
    V^{\polopt}(s) - V^{\pol}(s) \geqslant 0
  \end{align*}
  meaning that a \(0\)-discount-optimal policy \(\pol\) has to maximise the bias values
  \(V^{\pol}(s)\) for all states \(s\).
  By definition of the \(\epsilon\)-sensitive lexicographic order (\(a \lexeq b\) if and only if
  \(| a_{j} - b_{j} | \leqslant \epsilon\)), we have for a sufficiently large \(\gamma_{1}\)-value
  \(|e_{\gamma_{1}}^{\pol}(s)| \leqslant \epsilon\) and thus
  \(|V^{\pol}(s) - \X_{\gamma_{1}}^{\pol}(s)| \leqslant \epsilon \) for all states \(s\). Thus the claim
  follows.\qed{}
\end{proof}

\subsubsection{\(\infty\)-Discount-Optimality}
% \paragraph{\(\infty\)-Discount-Optimality}
%

Finally, in case $n \geqslant 1$, the agent has to choose actions that satisfy
\(e_{\gamma}^{\polopt}(s) - e_{\gamma}^{\pol}(s) \geqslant 0\) once \(\gamma \to 1\). That is, the
agent must maximise the error term, which means for \(n \geqslant 1\) we analyse the case where
\(\gamma < 1\) is used to incorporate short term rewards into the discounted state value, i.e.\@ how
long-sighted the agent shall be.
Therefore, the number of actions to reach a desired goal or path is taken into account, as well as
when and how much rewards are collected.
%
% Therefore, a \(\infty\)-discount-optimal policy \(\pol\) has to maximise the error term
% $e_{\gamma}^{\pol}(s)$ for all states \(s\).
% That is, an \(\infty\)-discount-optimal algorithm has to choose the action that maximises the error
% term \(e_{\gamma}^{\pol}(s)\) once \(\gamma \to 1\).
However, as the error term depends on infinitely many sub-terms simply estimating these and summing
up does not work.
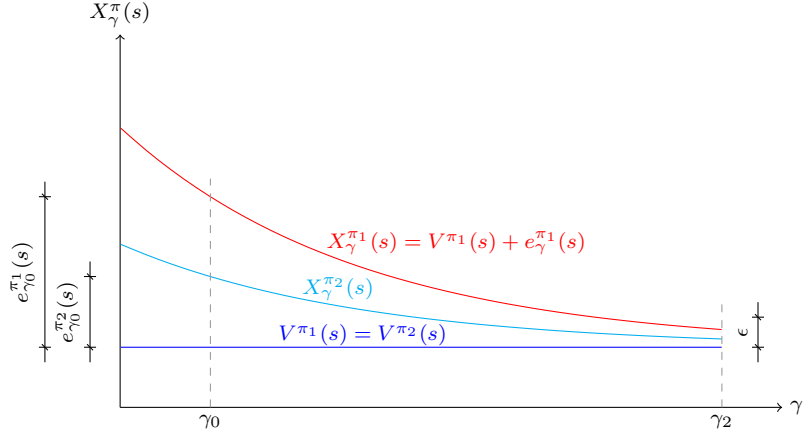
\begin{figure*}[t]
  \centering
  \begin{tikzpicture}[scale=8,
  declare function={
    red(\x) = {(0.08^((\x+0.4)))+0.2};
    cyan(\x) = {(0.08^((\x+0.7)))+0.2};
  },
  ]

  \draw[->] (0,0.1) -- (1.1,0.1) node[right] {$\gamma$};
  \draw[->] (0,0.1) -- (0,0.72) node[above] {$\X_{\gamma}^{\pol}(s)$};
  \draw[scale=1,domain=0.001:1.0,smooth,variable=\x,blue] plot ({\x},{0.2});
  \draw[scale=1,domain=0.001:1.0,smooth,variable=\x,red] plot ({\x},{red(\x)});
  \draw[scale=1,domain=0.001:1.0,smooth,variable=\x,cyan] plot ({\x},{cyan(\x)});

  % Labels for all 3 functions
  \tikzmath{\lx = 0.25; \eps=0.04; \delY=0.015;};
  \tikzmath{\yr = red(\lx+2*\eps)+\delY;};
  \tikzmath{\yc = cyan(\lx+\eps)+\delY;};
  \draw[red]  (\lx+2*\eps,\yr)   node[anchor=west] () {$\X_\gamma^{\pol_{1}}(s) = V^{\pol_{1}}(s) + e_{\gamma}^{\pol_{1}}(s)$};
  \draw[cyan] (\lx+\eps,\yc)   node[anchor=west] () {$\X_\gamma^{\pol_{2}}(s)$};
  \draw[blue] (\lx,0.2+\delY+0.0055) node[anchor=west]     () {$V^{\pol_{1}}(s) = V^{\pol_{2}}(s)$};

  % MEASURE RED
  \tikzmath{\e = red(0.15);}; % x = 0.15
  \tikzmath{\xe = -0.075;};

  % gamma_0 dashed indication line
  \draw[black, dashed, black!40] (0.15, 0.1) -- (0.15, \e+0.03);
  \draw (0.15, 0.1) node[below] {\(\gamma_{0}\)};

  % label
  \draw[black] (\xe-0.05, 0.2-0.025) -- (\xe-0.05, \e+0.025) node[midway,xshift=-0.3cm,rotate=90] {\footnotesize \(e_{\gamma_{0}}^{\pol_{1}}(s)\)};
  \draw[black] (\xe-0.06, 0.2)       -- (\xe-0.04, 0.2);
  \draw[black] (\xe-0.055, 0.205)    -- (\xe-0.045, 0.195);
  \draw[black] (\xe-0.06, \e)        -- (\xe-0.04, \e);
  \draw[black] (\xe-0.055, \e+0.005) -- (\xe-0.045, \e-0.005);

  % Measure cyan
  \tikzmath{\e = cyan(0.15);};      % x = 0.15
  \tikzmath{\xe = 0;};
  \draw[black] (\xe-0.05, 0.2-0.025) -- (\xe-0.05, \e+0.025) node[midway,xshift=-0.3cm,rotate=90] {\footnotesize \(e_{\gamma_{0}}^{\pol_{2}}(s)\)};
  \draw[black] (\xe-0.06, 0.2)       -- (\xe-0.04, 0.2);
  \draw[black] (\xe-0.055, 0.205)    -- (\xe-0.045, 0.195);
  \draw[black] (\xe-0.06, \e)        -- (\xe-0.04, \e);
  \draw[black] (\xe-0.055, \e+0.005) -- (\xe-0.045, \e-0.005);

  % Measure epsilon
  \tikzmath{\e = 0.25;};
  \tikzmath{\xe = 1.11;};

  % gamma_2
  \draw[black, dashed, black!40] (1.0, 0.1) -- (1.0, \e+0.025);
  \draw (1.0, 0.1) node[below] {\(\gamma_{2}\)};

  \draw[black] (\xe-0.05, 0.2-0.025) -- (\xe-0.05, \e+0.025) node[midway,xshift=-0.2cm,rotate=90] {\footnotesize \(\epsilon\)};
  \draw[black] (\xe-0.06, 0.2)       -- (\xe-0.04, 0.2);
  \draw[black] (\xe-0.055, 0.205)    -- (\xe-0.045, 0.195);
  \draw[black] (\xe-0.06, \e)        -- (\xe-0.04, \e);
  \draw[black] (\xe-0.055, \e+0.005) -- (\xe-0.045, \e-0.005);

  % \draw (-0.015,0.1) -- (0.015,0.1);
  % \draw (-0.111, 0.1) node[] () {$R^{\pol}(\sOne)$};

  % \draw (0.9,-0.015) -- (0.9,0.015) node[below, yshift=-0.2cm] () {$1$};
  % \draw (0.75,-0.015) -- (0.75,0.015) node[below, yshift=-0.2cm] () {$\gamma_{1}$};
  % \draw (0.20,-0.015) -- (0.20,0.015) node[below, yshift=-0.2cm] () {$\gamma_{0}$};

  % \draw (0.5,0.4) -- (0.5,0.258578644) node[midway,left, yshift=-0.2cm] () {$e_{\gamma}^{\pol}(\sOne)$};

  % \draw (0.2,0.146114562) -- (0.75,0.146114562) node[midway, below] () {$\gamma_{1}-\gamma_{0}$};
  % \draw (0.75,0.146114562) -- (0.75,0.5) node[midway,right] () {$V_{\gamma_{1}}^{\pol}(\sOne)-V_{\gamma_{0}}^{\pol}(\sOne)$};

\end{tikzpicture}

%%% Local Variables:
%%% mode: latex
%%% TeX-master: "../paper"
%%% End:
  \caption{Visualisation of the strictly monotonically decreasing error terms $e_{\gamma}^{\pol_{0}}(s)$
    and $e_{\gamma}^{\pol_{2}}(s)$ as $\gamma$ approaches 1}%
  \label{fig:err}
\end{figure*}
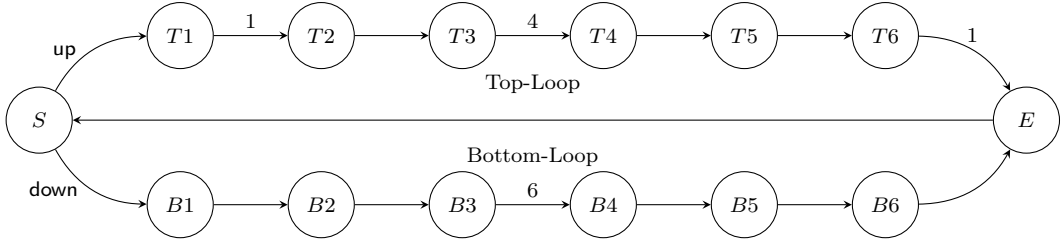
\begin{figure*}[t]
  \centering
  \begin{tikzpicture}[thin, scale=0.75]

  % Nodes
  \draw (-2.5   ,0)  node(S)  [circle,draw,minimum size=25] {\footnotesize \(S\)};
  \draw (0.0  ,1.5)  node(T1) [circle,draw,minimum size=25] {\footnotesize \(T1\)};
  \draw (2.5  ,1.5)  node(T2) [circle,draw,minimum size=25] {\footnotesize \(T2\)};
  \draw (5.0  ,1.5)  node(T3) [circle,draw,minimum size=25] {\footnotesize \(T3\)};
  \draw (7.5  ,1.5)  node(T4) [circle,draw,minimum size=25] {\footnotesize \(T4\)};
  \draw (10.0 ,1.5)  node(T5) [circle,draw,minimum size=25] {\footnotesize \(T5\)};
  \draw (12.5 ,1.5)  node(T6) [circle,draw,minimum size=25] {\footnotesize \(T6\)};
  \draw (0.0  ,-1.5) node(B1) [circle,draw,minimum size=25] {\footnotesize \(B1\)};
  \draw (2.5  ,-1.5) node(B2) [circle,draw,minimum size=25] {\footnotesize \(B2\)};
  \draw (5.0  ,-1.5) node(B3) [circle,draw,minimum size=25] {\footnotesize \(B3\)};
  \draw (7.5  ,-1.5) node(B4) [circle,draw,minimum size=25] {\footnotesize \(B4\)};
  \draw (10.0 ,-1.5) node(B5) [circle,draw,minimum size=25] {\footnotesize \(B5\)};
  \draw (12.5 ,-1.5) node(B6) [circle,draw,minimum size=25] {\footnotesize \(B6\)};
  \draw (15.0 ,0)  node(E) [circle,draw,minimum size=25]    {\footnotesize \(E\)};

  % Edges
  \path[thin, ->, bend left, >=stealth] (S)  edge[above, near start] node[xshift=-3] {\footnotesize \textsf{up}} (T1);
  \path[thin, ->,            >=stealth] (T1) edge[above] node {\footnotesize \(1\) } (T2);
  \path[thin, ->,            >=stealth] (T2) edge[above] node {\footnotesize     } (T3);
  \path[thin, ->,            >=stealth] (T3) edge[above] node {\footnotesize\(4\)} (T4);
  \path[thin, ->,            >=stealth] (T4) edge[above] node {\footnotesize     } (T5);
  \path[thin, ->,            >=stealth] (T5) edge[above] node {\footnotesize     } (T6);
  \path[thin, ->, bend left, >=stealth] (T6) edge[above] node {\footnotesize\(1\)} (E);

  \path[thin, ->,bend right, >=stealth] (S)  edge[below, near start] node[xshift=-7] {\footnotesize \textsf{down} } (B1);
  \path[thin, ->,            >=stealth] (B1) edge[above] node {\footnotesize     } (B2);
  \path[thin, ->,            >=stealth] (B2) edge[above] node {\footnotesize     } (B3);
  \path[thin, ->,            >=stealth] (B3) edge[above] node {\footnotesize\(6\)} (B4);
  \path[thin, ->,            >=stealth] (B4) edge[above] node {\footnotesize     } (B5);
  \path[thin, ->,            >=stealth] (B5) edge[above] node {\footnotesize     } (B6);
  \path[thin, ->,bend right, >=stealth] (B6) edge[above] node {\footnotesize     } (E);

  \path[thin, ->,            >=stealth] (E)  edge[above] node   {\footnotesize} (S);

  \draw (6.25,0.65)  node[] { Top-Loop    };
  \draw (6.25,-0.65) node[] { Bottom-Loop };
\end{tikzpicture}

%%% Local Variables:
%%% mode: latex
%%% TeX-master: "../paper"
%%% End:
  \caption{An example MDPs for which the discount factor \(\gamma_{0}\) can be used to balance
    short- and long-sightedness for near-Blackwell-optimal algorithm }%
  \label{fig:parallel}
\end{figure*}
%
% V_{x}(S,up) = lim(sum(t,0,N, x^(t+1) + 4*x^(t+3) + 1*x^(t+6)),N,infinity) = (x^6 + 4 x^3 + x)/(1 - x)
% V_{x}(S,down) = lim(sum(t,0,N, 6*x^(t+3)),N,infinity) = (-6 x^3)/(-1 + x)
%
% ==> x = 0.84837
%
% ----
% X_{x}(S,up) = lim(sum(t,0,N,-x^(t)*0.75+x^(t+1)*0.25-x^(t+2)*0.75+x^(t+3)*3.25-x^(t+4)*0.75-x^(t+5)*0.75+x^(t+6)*0.25-x^(t+7)*0.25),N,infinity)=(x^7-x^6+3x^5+3x^4-13x^3+3x^2-x+3)/(4(x-1))
%
%
%

The RL algorithm depicted in Algorithm~\ref{alg:near} is unable to generally deduce
Blackwell-optimal policies. The cause is illustrated in Figure~\ref{fig:err}, where we assume the
error terms to be polynomials of the same degree. Let the policies \(\pol_{1}\), \(\pol_{2}\) choose
the actions \(\aOne{}\) and \(\aTwo{}\) resp.\@ of the set of \(0\)-discount-optimal stationary
actions in state \(s\) and then follow policy \(\pol\).
Note the different slopes of the two approximated values of \(\X_{\gamma}^{\cdot}(s)\). Therefore,
by interpolation we know that for very high values of \(\gamma\) we have
\(\X_{\gamma}^{\pol_{2}}(s) > \X_{\gamma}^{\pol_{1}}(s)\). However, the agent will choose action
\(\aOne{}\) as it maximises \(\X_{\gamma_{0}}^{\pol}(s)\). Therefore, this means that the parameter
\(\gamma_{0}\) defines how long-sighted the agent is. The MDP of Figure~\ref{fig:parallel} explains
the idea. The only action choice is in state \(S\), where the agent either decides to do the top- or
bottom-loop. For both loops the same amount of rewards are collected, thus the average reward
(\(0.75\)) and bias values (\(0.25\) for action \textsf{up}, \(0.375\) for action \textsf{down}) are
equal, regardless of the chosen policy. But only going \textsf{down} is Blackwell-optimal, as the
full amount of rewards is collected sooner. This also manifests in a higher bias value. However,
recall that the agent is unable to separate the actions by the bias values in case the same amount
of rewards are collected.
If we set \(\gamma_{0} = 0.50\) the agent deduces state-values
\(V_{\gamma_{0}}^{\pol_{\mathsf{Top}}}(S)=-0.480\) and
\(V_{\gamma_{0}}^{\pol_{\mathsf{Bottom}}}(S)=-0.746\) and thus like for any other value
\(\gamma_{0} < 0.84837\) chooses the top-loop.

This shows how for our algorithm \(\gamma_{0}\) functions exactly as the discount factor in standard
RL is supposed to do, due to the fact that the state values are adjusted: It can be used to balance
expected short-term and long-term rewards, without changing the main optimisation objective, i.e.\@
maximise for the highest average reward and bias values, before taking path lengths into account.
Especially for highly volatile systems, e.g.\@ stochastic production and control systems, being able
to set the long-sightedness can be an advantage over Blackwell-optimal agents. Nonetheless, setting
very high values for \(\gamma_{0}\) and using the \(\epsilon \approx 0\) for the comparison of
\(\X_{\gamma_{0}}^{\pol}(s)\) values, could be an approach in finding Blackwell-optimal policies for
many MDPs. But recall that for any arbitrary large \(\gamma_{0} < 1\) it is possible to construct
MDPs which lead to non-optimal policies~\cite{schwartz1993reinforcement}.

\begin{theorem}
  A \(n\)-discount-optimal agent for \(n \geqslant 1\) has to maximise the error term
  \(e_{\gamma}^{\pol}(s)\) once \(\gamma \to 1\).
\end{theorem}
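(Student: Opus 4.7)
The plan is to start from the definition of $n$-discount-optimality (Definition~\ref{def:n-discount-optimality}) and substitute the Laurent series expansion of $V_{\gamma}^{\pol}(s)$ for both $\polopt$ and $\pol$. This yields
\begin{align*}
  \lim_{\gamma \to 1}(1-\gamma)^{-n}\Bigl(\tfrac{\avgrew^{\polopt}(s) - \avgrew^{\pol}(s)}{1-\gamma} + V^{\polopt}(s) - V^{\pol}(s) + e_{\gamma}^{\polopt}(s) - e_{\gamma}^{\pol}(s)\Bigr) \geqslant 0.
\end{align*}
The key observation, already used in the previous subsections, is that $n$-discount-optimality implies $m$-discount-optimality for every $m < n$ (Puterman~\cite{Puterman94}, Veinott~\cite{Veinott69}). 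Hence, when comparing candidate $n$-discount-optimal policies with $n \geqslant 1$, the inductive hypothesis from the $(-1)$- and $0$-discount-optimality cases forces $\avgrew^{\polopt}(s) = \avgrew^{\pol}(s)$ and $V^{\polopt}(s) = V^{\pol}(s)$, which eliminates the first two addends in the bracket.

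What remains is the inequality $\lim_{\gamma \to 1}(1-\gamma)^{-n}(e_{\gamma}^{\polopt}(s) - e_{\gamma}^{\pol}(s)) \geqslant 0$. Since $(1-\gamma)^{-n}$ is strictly positive for $\gamma < 1$, the sign of the expression is determined by the sign of the error term difference, and so the claim follows directly: an $n$-discount-optimal policy with $n \geqslant 1$ must select actions whose error term $e_{\gamma}^{\polopt}(s)$ dominates that of any alternative policy $\pol$ in the limit $\gamma \to 1$.

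The subtle step I anticipate requires explicit comment: because $e_{\gamma}^{\pol}(s) \to 0$, the raw error difference vanishes, and the role of the scaling factor $(1-\gamma)^{-n}$ is precisely to magnify the rate at which $e_{\gamma}^{\polopt}(s) - e_{\gamma}^{\pol}(s)$ approaches zero so that the comparison is non-vacuous. In the proof I would make this precise by appealing to Puterman's representation of $e_{\gamma}^{\pol}(s)$ as a convergent power series in $(1-\gamma)$ with coefficients $y_{k}^{\pol}(s)$, so that the product $(1-\gamma)^{-n} e_{\gamma}^{\pol}(s)$ behaves, to leading order, like $y_{n}^{\pol}(s)$ as $\gamma \to 1$. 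The inequality then reduces to a comparison of the $n$-th Laurent coefficients, confirming that maximising the error term in the limit $\gamma \to 1$ is the correct operational reading of $n$-discount-optimality for $n \geqslant 1$.

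The main obstacle will be handling this limit argument rigorously, since the straightforward statement of the theorem hides the fact that one is really comparing higher-order Laurent coefficients rather than the error terms themselves at any fixed $\gamma$. Once that subtlety is dispatched via the cited series representation, the rest is a direct cancellation argument using the nested optimality hierarchy.
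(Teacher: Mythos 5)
Your proposal follows essentially the same route as the paper: substitute the Laurent series expansion into the $n$-discount-optimality criterion, cancel the gain and bias terms using the nested optimality hierarchy ($\avgrew^{\polopt}=\avgrew^{\pol}$ and $V^{\polopt}(s)=V^{\pol}(s)$ from $(-1)$- and $0$-discount-optimality), and reduce to $\lim_{\gamma \to 1}(1-\gamma)^{-n}(e_{\gamma}^{\polopt}(s)-e_{\gamma}^{\pol}(s)) \geqslant 0$. Your explicit treatment of the subtlety that the raw error difference vanishes and that the comparison is really one of the $n$-th Laurent coefficients $y_{n}^{\pol}$ is exactly the content the paper relegates to the remark immediately following its proof, so your version is, if anything, slightly more careful about the same argument.
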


\begin{proof}
  % The following derivation provides an important insight to the analysis.
  Recall that for unichain MDPs the average reward \(\avgrew^{\pol}(s)\) for all states \(s\) is
  equal and stated as \(\avgrew^{\pol}\). Furthermore, as the policy must be \(0\)-discount-optimal
  to be eligible for \(n\)-discount-optimality with \(n \geqslant 1\) we have
  \(V^{\polopt}(s) = V^{\pol}(s)\) for all states \(s\). Therefore,
  \begin{align*}
    \lim_{\gamma \to 1} (1-\gamma)^{-n} (V_{\gamma}^{\polopt}(s) - V_{\gamma}^{\pol}(s)) & \geqslant 0\\
    \lim_{\gamma \to 1} (1-\gamma)^{-n} ( \frac{\avgrew^{\polopt}(s)}{1-\gamma} + V^{\polopt}(s) + e_{\gamma}^{\polopt}(s)  %& \\
    - \frac{\avgrew^{\pol}(s)}{1-\gamma} - V^{\pol}(s) - e_{\gamma}^{\pol}(s) ) & \geqslant 0\\
    \lim_{\gamma \to 1} (1-\gamma)^{-n} ( e_{\gamma}^{\polopt}(s) - e_{\gamma}^{\pol}(s) ) & \geqslant 0\\
  \end{align*}
  which completes the proof as the error term does not depend on \(n\). However, note that as
  \(e_{\gamma}^{\polopt}(s)\) approaches 0 as \(\gamma \to 1\), we are interested in the cases
  where \(\gamma < 1\). That is, the error term incorporates the amount and number of steps until
  rewards are collected. \qed{}

  % \MS{why is it sufficient to just compare the sum \(e_{\gamma}^{\pol}(s)\) instead of comparing all
  % its addends?} Cause it does not depend on \(n\)!

\end{proof}

In other works, the error term is often split into its subterms by
\(e_{\gamma}^{\pol}(s) \defsym \sum_{m=1}^{\infty}(\frac{1-\gamma}{\gamma})^{m} \cdot y_{m}^{\pol}\)
(for the definition see~\cite{MillerVeinott1969}). 
Then for any \(n \geqslant 1\) the terms evaluate to maximising \(y_{n}^{\pol}\), as
for all subterms \(<n\) the values are equal due to \((n-1)\)-discount-optimality and for \(n>1\) the
terms evaluate to \(0\).
This leads to the approaches of average reward dynamic programming, where \(n\)-nested sets of
constraints are solved~\cite[e.g. p.511ff]{Puterman94}. Clearly, this straightforward approach is
computationally very expensive and infinite. Therefore, and also due to the imposed infinite
polynomial structure of the error term formula we refrain on adapting this strategy and rather let
the user choose an appropriate \(\gamma_{0}\) value for the provided situation.

Thus, for a given MDP and under the assumption of correct approximations and wisely selected
discount-factors \(\gamma_{0}\) and \(\gamma_{1}\) in combination of the chosen \(\epsilon\)-value
our algorithm is able to infer Blackwell-optimality policies. Nonetheless due to the accuracy of
floating-point representation of modern computer systems the previous statement is naturally
bounded. Therefore, the in this paper established reinforcement learning algorithm \ARA{} is
\textbf{near-Blackwell-optimal}.

% However, in general the
% long-sightedness is configured by the user and thus our algorithm infers bias-optimal policies with
% the ability to 

\section{Experimental Evaluation}%
\label{sec:Experimental_Evaluation}

In this section we prove the viability of the algorithm with three examples. We compare the
algorithm to standard discounted RL, where we choose the widely applied
Q-Learning~\cite{sutton1998introduction,Watkins89_LearningFromDelayedRewards} technique as
appropriate model-free comparison method. The Q-Learning algorithm is given in
Algorithm~\ref{alg:qlearn}. We have adapted the parameter names accordingly to match the \ARA{}
algorithm from above.

\begin{algorithm}[t!]
  \begin{algorithmic}[1]
    \State{}Initialize state $s_{0}$, $Q_{\gamma_{1}}^{\pol}(\cdot, \cdot) = 0$, set an
    exploration rate \(0 \leqslant
    p_{exp} \leqslant 1\) and \(0 < \gamma, \gamma_{1} < 1\).
    \While{the stopping criterion is not fulfilled}
    \State{}\begin{minipage}[t]{0.9\textwidth} With probability \(p_{exp}\) choose a random action
      and probability \(1-p_{exp}\) one that fulfills \(\max_{a}Q^{\pol}_{\gamma_{1}}(s_{t},a)\) at the
      current state \(s_{t}\).
    \end{minipage}
    \State{}Carry out action \(a_{t}\), observe reward \(r_{t}\) and resulting state \(s_{t+1}\).
    \State{}Update the discounted state-values.
    \begin{align*}
      Q^\pol_{\gamma_1}(s_{t},a_{t}) & \gets (1-\gamma) Q^\pol_{\gamma_1}(s_{t},a_{t}) + \gamma [r_{t} + \gamma_{1} \max_{a'} Q^\pol_{\gamma_1}(s_{t+1},a')]
    \end{align*}
    \State{} Set \(s \gets s'\), \(t \gets t+1\) and decay parameters
    \EndWhile{}
  \end{algorithmic}
  \caption{\label{alg:qlearn}Watkins Q-Learning algorithm~\cite{Watkins89_LearningFromDelayedRewards}.
    Adapted from the version by Sutton~\cite[p.149]{sutton1998introduction}.}
\end{algorithm}

\subsection{Printer-Mail}%
\label{subsec:Printer-Mail}

Reconsider Figure~\ref{fig:printer} discussed above. The agent chooses either the printer loop or
the mail loop, whereas the mail loop returns a reward of \(20\) every tenth step and the printer
loop \(5\) every fifth step. As there is no stochastic in the reward function, nor the transition
function, the problem can be easily solved until convergence. To do so we set the learning rate
\(\gamma = 0.01\) and for \ARA{} \(\alpha =0.01\) exponentially decayed with rate \(0.25\) in
\(100k\) steps, where the minimum is set to \(10^{-6}\), and the discount factors
\(\gamma_{0} = 0.8, \gamma_{1} = 0.99\). For Q-Learning we repeated the experiment with discount
factors \(\gamma_{1} = 0.99\), \(\gamma_{1}=0.8\) and \(\gamma_{1}=0.50\).

The results are depicted in Figure~\ref{tbl:printer}, where the state-action values for
\((1, \mathsf{left})\) and \((1, \mathsf{right})\) are shown. The table reports the values of
\(\X_{\gamma_{1}}^{\pol}(\cdot)\) for \ARA{}, and \(Q_{\gamma_{1}}^{\pol}(\cdot)\) in case of
Q-Learning. The reported number of steps are measured until convergence, which we defined as no
state-value change in \(100k\) steps.
As \ARA{}, as well as Q-Learning with \(\gamma_{1}=0.99\), report a greater value for going
\textsf{right} than taking action \textsf{left}, both infer the optimal policy, while Q-Learning
with \(\gamma_{1} = 0.80\) and \(\gamma_{1}=0.50\) are unable to find it.
This shows clearly that it is crucial in standard discounted RL to use a discount factor close to
\(1\).
When setting \(\gamma_{1}=0.8027\) the discounted values are
\(V_{\gamma_{1}}^{\pol}(1, \mathsf{left}) = V_{\gamma_{1}}^{\pol}(1, \mathsf{right}) = 3.113\).
The average reward learned by \ARA{} is \(1.999\) and thus very close to the actual value of \(2\).
% This slight offset is also the reason that \ARA{} and Q-Learning with \(\gamma_{1}=0.99\) do have a
% different estimates of \(0.142\).
Note that as all states are assessed with the same average reward it is not necessarily to estimate
it perfectly, and thus small deviations have no impact on the policy of \ARA{}.
The reported steps to convergence for Q-Learning increase exponentially when increasing the discount
factor, which is due to the requirement of very high discount factors a rather unsatisfactory
behaviour. Especially as \ARA{} converges in \(10^{6}\) steps, while Q-Learning \(\gamma_{1}=0.99\)
requires a more than ten times longer learning phase.

\begin{table}
  \centering
  \ratab{}
  \begin{tabular}{@{}lrrrr@{}}
    \toprule
    % \multirow{2}{*}{\textbf{Printer-Mail}}  &\multirow{2}{*}{} &  \multicolumn{3}{c}{Q-Learning} % \\
      & &  \multicolumn{3}{c}{Q-Learning} \\
    % \textbf{Printer-Mail} &  & \multicolumn{3}{c}{Q-Learning} \\
    % \cline{3-5}
    \cmidrule{3-5}
    % BEGIN RECEIVE ORGTBL tbl:printer
    Measure & \ARA{} & \(\gamma_{1}=0.99\) & \(\gamma_{1}=0.80\) & \(\gamma_{1}=0.50\)\\
\midrule
State \((1, \mathsf{left})\) & \(-13.349\) & \(186.509\) & \(3.046\) & 0.323\\
State \((1, \mathsf{right})\) & \(-8.787\) & \(191.070\) & \(3.011\) & 0.039\\
% \hline
Steps in \(10^{6}\) & \(1.0\) & \(10.3\) & \(0.5\) & \(0.4\)\\
\bottomrule
    % END RECEIVE ORGTBL tbl:printer
  \end{tabular}
  \caption{\label{tbl:printer} The state-values for state \(1\) of the printer-mail MDP and the
    number of steps until convergence, where \ARA{} inferred an average reward of \(\avgrew^{\pol}=1.999\) }
\end{table}

\subsection{Gridworld}
\label{subsec:Gridworld}

We use a scaled up version of the MDP given in Figure~\ref{fig:grid} by increasing the state space
to a \(5 \times 5\) grid to make the optimisation task a little bit more challenging. The reward
function is adapted accordingly, where moving out of the grid is punished and otherwise a stochastic
reward of \(\Unif(0,8)\) is returned. The goal state is indifferent. Again, it is obvious that the
optimal policy is to traverse to \((0,0)\) as fast as possible. Further, the system is symmetric,
where states \((m,n)\) and \((n,m)\) are equal when also swapping the actions \textsf{left} with
\textsf{up} and \textsf{right} with \textsf{down} respectively. For the optimal policy the average
number to reach the goal state is \(4\) steps, with a reward of on average \(4\) and then there is
the random action which produces a reward of \(10\). Thus, the average reward for the optimal policy
is \(5.2\) and the average number of steps to reach the goal state should be \(5\). However, note
that we use no episodes, i.e.\@ there is no terminal state, as otherwise the Q-Learning algorithm
fails completely by ignoring the average reward in the terminal state, and thus producing policies
with the lowest estimate in the goal state.

We initialise the learning rates \(\alpha = \gamma = 0.01\), the discount factor
\(\gamma_{0} = 0.80\) and set \(\epsilon = 0.25\), \(p_{exp} = 1.00\). The learning rates and
exploration are exponentially decayed as follows. \(\alpha\) with a rate of \(0.50\) in \(50k\)
steps and a minimum of \(10^{-5}\), \(\gamma\) with rate of \(0.50\) in \(150k\) steps and
\(10^{-3}\), and finally the exploration with rate \(0.50\) in \(100k\) steps and a minimum of
\(0.01\). We execute \(500k\) learning steps before doing an evaluation run of \(10k\) steps for
which exploration and learning is disabled. The experiment, including the learning process, is
repeated \(40\) times with the same random number streams over the different setups.

\begin{table}
  \centering
  \ratab{}
  \begin{tabular}{@{}lrrrr@{}}
    \toprule
    % \multirow{2}{*}{\textbf{Gridworld}} & \multicolumn{2}{c}{Sum Reward}  &
    % \multicolumn{2}{c}{Avg.\@ Steps to Goal}\\
    & \multicolumn{2}{c}{Sum Reward}  & \multicolumn{2}{c}{Avg.\@ Steps to Goal}\\
    \cmidrule(r){2-3} \cmidrule(l){4-5}
    % BEGIN RECEIVE ORGTBL tbl:grid
 Algorithm & Mean & StdDev & Mean & StdDev\\
\midrule
\ARA{} \(\gamma_1=0.99\) & \cellcolor{gr}\(\textbf{51894.094}\) & \(234.260\) & \(\textbf{5.039}\) & \(0.047\)\\
\ARA{} \(\gamma_1=0.999\) & \cellcolor{gr}\(51878.069\) & \(282.335\) & \cellcolor{gr}\(5.063\) & \(0.054\)\\
    \ARA{} \(\gamma_1=1.00\) & \cellcolor{gr}\(51856.529\) & \(242.953\) & \cellcolor{gr}\(5.055\) & \(0.039\)\\
    \hhline{|~|-|~|-|~|}
Q-Learning  \(\gamma_1=0.99\) & \cellcolor{gr2}\(34409.464\) & \(3391.818\) & \cellcolor{gr2}\(7661.833\) & \(3518.656\)\\
Q-Learning  \(\gamma_1=0.999\) & \cellcolor{gr2}\(33931.917\) & \(3076.401\) & \cellcolor{gr2}\(7379.155\) & \(3745.914\)\\
Q-Learning  \(\gamma_1=0.50\) & \(30171.837\) & \(328.591\) & \(9999.000\) & \(0.000\)\\
\bottomrule
    % END RECEIVE ORGTBL tbl:grid
  \end{tabular}\\[0.5em]
  \caption{\label{tbl:grid} The results of the gridworld example, where all \ARA{} instances
  inferred an average reward of \(\avgrew^{\pol}=5.215\)}
\end{table}

Table~\ref{tbl:grid} presents the summary of the results for the gridworld experiment. We have
evaluated both methods with three different discount factors, namely \(0.99\), \(0.999\) and
\(1.00\) for \ARA{} and \(0.50\), \(0.99\) and \(0.999\) for Q-Learning.
We used the Friedman test with a significance level of \(p=0.05\) for a statistical analysis of the
mean sum of rewards and the mean average steps until the goal step. As expected the omnibus null
hypotheses (all samples are from the same distribution) are rejected (with \(3.223\text{e}{-33}\)
for the mean reward and \(1.736\text{e}{-34}\) for the mean avg.\@ steps to goal). Therefore, we
conducted pairwise Conover post-hoc tests adjusted by the Benjaminyi-Hochberg FDR
method~\cite{benjamini1995controlling} to reduce liberality (see the Appendix for the detailed
results).
Measures highlighted with the same shade of grey are not statistically distinguishable from each
other.
%
% and test both maximisation techniques: \(\epsilon\)-sensitive action selection as in \ARA{} or with
% a simple maximum-operation, as in the original Q-Learning algorithm, cf.\@ Figure~\ref{alg:qlearn}.
%
It can be seen that all \ARA{} instance perform very well, outperforming Q-Learning in terms of
amount of collected rewards, as well as finding the shortest path to the goal state. More precisely
all \ARA{} instances collect on average over all \(40\) experiments a reward of \(5.19\) per step,
which is only \(0.01\) less than the optimum of \(5.2\), while the best Q-Learning variant receives
\(3.44\) which means that Q-Learning not even learns to avoid to steer of the grid. The standard
deviations, especially of the average number of steps to reach the goal state, undermine the great
performance of \ARA{} as it shows how stable the algorithm works while the Q-Learning results are
rather unstable.
\ARA{} \(\gamma_1 = 0.99\) performs significantly better in terms of mean average number of steps to
the goal state than the other setups, while for the mean sum of reward all \ARA{} instances are
significantly indifferent. This is due to the very small deviations of the former measure.
Further the average steps show Q-Learning is unable to find even close-to near-optimal policies. The
estimated average reward by the \ARA{} algorithm is \(\avgrew^{\pol}=5.215\) for all setups, and
thus almost perfectly matches the analytically inferred \(5.20\). An explanation, why Q-Learning
performs that poorly can be seen in Figure~\ref{fig:grid4}. It provides discounted state-values as
estimated by the Q-Learning \(\gamma_{1}=0.99\) variant after \(1\) Million steps of learning.
Here we can see a cluster forming at the state-actions pairs of state \((0,4)\), which have with
almost \(400\) a far higher evaluation than the goal state \((0,0)\) of around \(100\). Clearly, the
surrounding states adapt accordingly worsening the situation.
% It can be seen that the agent falsely expects state \((0,4)\) to return more reward than the other
% states of this row.

\begin{figure}[tb]
  \centering
  \begin{tikzpicture}[thin, scale=1.5]
  % Points
  \newcommand\maxX{4}         % = maxX
  \newcommand\maxY{0}         % = maxY
  \newcommand\goalX{0}         % = maxX
  \newcommand\goalY{0}         % = maxY

  \tikzmath{\maxXP = \maxX + 1; \maxYP = \maxY + 1;};

  \foreach \x in {0,...,\maxXP} {
    \foreach \y in {0,...,\maxYP} {
      \draw (\x,0) -- (\x,\maxYP) node[] {};
      \draw (0,\y) -- (\maxXP,\y) node[] {};
    }
    \draw (\x,0) -- (\x,-0.25) node[] {};
  }
  \foreach \x in {0,...,\maxX} {
    \draw (\x+0.5,-0.25) node[] (\x+0.5,-0.25) {\(\ldots\)};
  }

  % Node names
  \foreach \x in {0,...,\maxX} {
    \foreach \y in {0,...,\maxY} {
      \tikzmath{\xP = int(\maxX - \x); \yP = int(\maxY - \y);};
      \draw (\x+0.5, \y+0.5) node[] {(\yP,\x)};
    }
  }

  % goal state
  \draw (\goalX+0.5, \maxY-\goalY+0.3) node[] (\goalX,\goalY,rand) {\tiny 101.260};

  % x = 1; y=0
  \tikzmath{\x = 1; \y = 0; \xP = int(\maxX - \x); \yP = int(\maxY - \y);};
  \draw (\x+0.5, \y+0.9) node[] (\yP,\x,up)                  {\tiny \( 23.113 \)};
  \draw (\x+0.5, \y+0.1) node[] (\yP,\x,down)                {\tiny \( 37.331 \)};
  \draw (\x+0.1, \y+0.5) node[rotate=90] (\yP,\x,left)       {\tiny \( 67.160 \)};
  \draw (\x+0.9, \y+0.5) node[rotate=90] (\yP,\x,right)      {\tiny \( 61.783 \)};

  % x = 2; y=0
  \tikzmath{\x = 2; \y = 0; \xP = int(\maxX - \x); \yP = int(\maxY - \y);};
  \draw (\x+0.5, \y+0.9) node[] (\yP,\x,up)                  {\tiny \( 162.243 \)};
  \draw (\x+0.5, \y+0.1) node[] (\yP,\x,down)                {\tiny \( 153.653 \)};
  \draw (\x+0.1, \y+0.5) node[rotate=90] (\yP,\x,left)       {\tiny \( 49.812  \)};
  \draw (\x+0.9, \y+0.5) node[rotate=90] (\yP,\x,right)      {\tiny \( 349.515 \)};

  % x = 3; y=0
  \tikzmath{\x = 3; \y = 0; \xP = int(\maxX - \x); \yP = int(\maxY - \y);};
  \draw (\x+0.5, \y+0.9) node[] (\yP,\x,up)                  {\tiny \( 354.839 \)};
  \draw (\x+0.5, \y+0.1) node[] (\yP,\x,down)                {\tiny \( 351.907 \)};
  \draw (\x+0.1, \y+0.5) node[rotate=90] (\yP,\x,left)       {\tiny \( 306.761 \)};
  \draw (\x+0.9, \y+0.5) node[rotate=90] (\yP,\x,right)      {\tiny \( 399.560 \)};

  % x = 4; y=0
  \tikzmath{\x = 4; \y = 0; \xP = int(\maxX - \x); \yP = int(\maxY - \y);};
  \draw (\x+0.5, \y+0.9) node[] (\yP,\x,up)                  {\tiny \( 398.391 \)};
  \draw (\x+0.5, \y+0.1) node[] (\yP,\x,down)                {\tiny \( 399.539 \)};
  \draw (\x+0.1, \y+0.5) node[rotate=90] (\yP,\x,left)       {\tiny \( 399.376 \)};
  \draw (\x+0.9, \y+0.5) node[rotate=90] (\yP,\x,right)      {\tiny \( 398.444 \)};

  % Name
  % \draw (\maxXP/2, -0.25) node[below,minimum height=0.75cm] { \small Estimated \(Q_{\gamma_{1}}^{\pol}(s,a)\) values};
\end{tikzpicture}

%%% Local Variables:
%%% mode: latex
%%% TeX-master: "../paper"
%%% End:
  \caption{\label{fig:grid4} This Figure shows estimated values \(Q_{0.99}^{\pol}(s,a)\) of
    Q-Learning after 1 million steps}
\end{figure}

\subsection{Admission Control Queuing System}
\label{subsec:Admission_Control_Queuing_System}

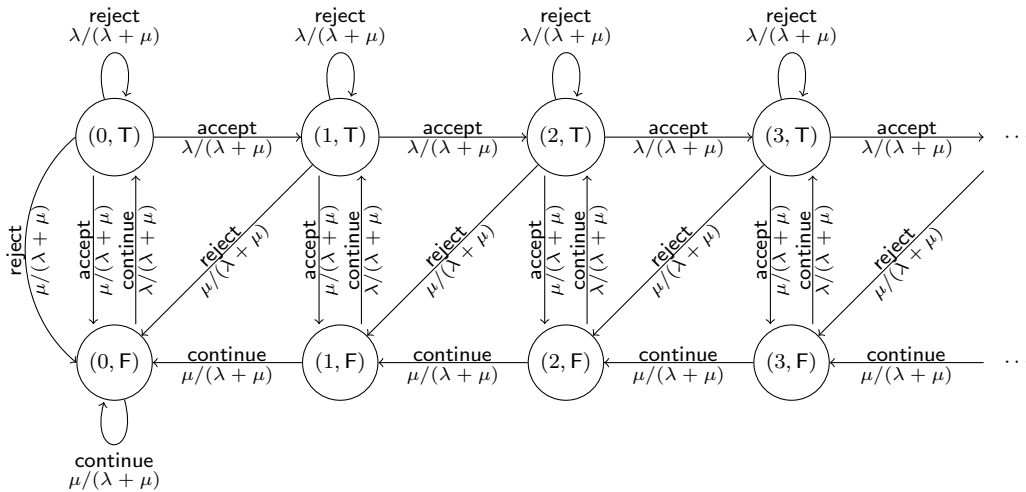
\begin{figure}[t!]
  \centering
  \begin{tikzpicture}[thin, scale=1.0]
  % Points
  \newcommand\maxX{3}         % = maxX
  \newcommand\dist{3}
  \newcommand\lastdist{3}
  \newcommand\short{0.405cm}
  \newcommand\move{0.175cm}
  \newcommand\noarrname{continue}
  \tikzmath{\maxXP = int(\maxX - 1); \maxXN = int(\maxX + 1); };

  \foreach \x in {0,...,\maxX} {
    \node[minimum size=25, draw, circle] (arr\x) at (\dist*\x,0) { \((\x,\mathsf{T})\) };
    \node[minimum size=25, draw, circle] (noarr\x) at (\dist*\x,-\dist) { \((\x,\mathsf{F})\) };
  }
  \node[minimum size=25] (arr\maxXN) at (\dist*\maxX+\lastdist,0) { \(\cdots\) };
  \node[minimum size=25] (noarr\maxXN) at (\dist*\maxX+\lastdist,-\dist) { \(\cdots\) };

  \foreach \x in {0,...,\maxX} {
    % loop
    \path[->] (arr\x) edge[loop above] node[text width=1.5cm,align=center] {\footnotesize
      \textsf{reject}  \\[-0.75ex] \scriptsize \(\lambda / (\lambda + \mu)\)} (arr\x);

    % up down
    \draw (arr\x) node[right=\move] (arrr\x) {};
    \draw (noarr\x) node[right=\move] (noarrr\x) {};
    \draw (arr\x) node[left=\move] (arrl\x) {};
    \draw (noarr\x) node[left=\move] (noarrl\x) {};
    \path[->,shorten <= \short, shorten >= \short] (noarrr\x) edge node[text
    width=1.5cm,align=center,rotate=90,yshift=-0.75] {\footnotesize  \textsf{\noarrname}
      \\[-0.75ex] \scriptsize \(\lambda / (\lambda + \mu)\)} (arrr\x);
    \path[->,shorten <= \short, shorten >= \short] (arrl\x) edge   node[text
    width=1.5cm,align=center,rotate=90,yshift=-0.75] {\footnotesize  \textsf{accept}  \\[-0.75ex]
      \scriptsize \(\mu / (\lambda + \mu)\)} (noarrl\x);
  }

  \foreach \x in {0,...,\maxX} {
    % ->
    \tikzmath{\xN = int(\x + 1); };
    \path[thin,->] (arr\x.east) edge  node[text width=1.5cm,align=center,yshift=-1] {\footnotesize
      \textsf{accept}  \\[-0.75ex] \scriptsize \(\lambda / (\lambda + \mu)\)} (arr\xN.west);
  }
  \foreach \x in {0,...,\maxX} {
    % diagonal
    \tikzmath{\xN = int(\x + 1); };
    \path[thin,->] (arr\xN) edge  node[text width=1.5cm,align=center,yshift=-1, rotate=45]
    {\footnotesize  \textsf{reject}  \\[-0.75ex] \scriptsize \(\mu / (\lambda + \mu)\)} (noarr\x);
  }

  \foreach \x in {1,...,\maxXN} {
    % <-
    \tikzmath{\xP = int(\x - 1); };
    \path[thin,->] (noarr\x.west) edge  node[text width=1.5cm,align=center,yshift=-1]
    {\footnotesize  \textsf{\noarrname}  \\[-0.75ex] \scriptsize \(\mu / (\lambda + \mu)\)} (noarr\xP.east);
  }

  % (0,F) loop
  \path[->] (noarr0) edge[loop below] node[text width=1.5cm,align=center] {\footnotesize
    \textsf{\noarrname}  \\[-0.75ex] \scriptsize \(\mu / (\lambda + \mu)\)} (noarr0);
  % (0,T) -> (0,F)
  \path[thin,->] (arr0.west) edge[bend right=50]  node[text
  width=1.5cm,align=center,xshift=1,rotate=90] {\footnotesize  \textsf{reject}  \\[-0.2ex]
    \scriptsize \(\mu / (\lambda + \mu)\)} (noarr0.west);
\end{tikzpicture}

%%% Local Variables:
%%% mode: latex
%%% TeX-master: "../paper"
%%% End:
  \caption{\label{fig:queuing-system} This diagram illustrates a simple M/M/1 admission control
    queuing system
    (adapted from Mahadevan~\cite{Mahadevan96_AnAveragerewardReinforcementLearningAlgorithmForComputingBiasoptimalPolicies,Mahadevan96_SensitiveDiscountOptimalityUnifyingDiscountedAndAverageRewardReinforcementLearning})
  }
\end{figure}

Finally, like
Mahadevan~\cite{Mahadevan96_AnAveragerewardReinforcementLearningAlgorithmForComputingBiasoptimalPolicies,Mahadevan96_SensitiveDiscountOptimalityUnifyingDiscountedAndAverageRewardReinforcementLearning}
we evaluate the algorithm on a simple M/M/1 admission control queuing system. That is, we assume one
server that processes jobs which arrive by an exponential (Markov) interarrvial time distribution.
Furthermore, the processing duration is assumed to be exponentially distributed.
The arrival and service rate are modeled by parameter \(\lambda\) and \(\mu\) respectively. On each
new arrival the agent has to decide whether to accept the job and thus add it to the queue, or
reject the job. In case of acceptance an immediate reward is received, which however, also incurs a
holding cost depending on the current queue size. The goal is to maximise the reward by balancing
the admission allowance reward and the holding costs. The MDP is depicted in
Figure~\ref{fig:queuing-system} and
was observed through uniformisation from a continuous time problem to a discrete time specification
(see~\cite{Puterman94,bertsekas1995dynamic} for a description of uniformisation). The set of
states consists of elements \((l,\mathsf{Arr})\) with queue length \(l \in \N\) and a Boolean
variable \(\mathsf{Arr} \in \{\mathsf{T},\mathsf{F}\}\) where \(\mathsf{Arr}\) symbolises an arrival
(\(\mathsf{T}\)), or no arrival (\(\mathsf{F}\)).
The edges are labelled with the corresponding action, that is, \textsf{accept} and \textsf{reject} in
case of a new arrival, or \textsf{continue} for continuation when no new job arrived, and the
corresponding transition probability.
We define the reward function \(r\) as
in~\cite{Mahadevan96_AnAveragerewardReinforcementLearningAlgorithmForComputingBiasoptimalPolicies,Mahadevan96_SensitiveDiscountOptimalityUnifyingDiscountedAndAverageRewardReinforcementLearning}
by
\begin{align*}
  r((0,\mathsf{F}), \mathsf{continue}) & = r((0,\mathsf{T}), \mathsf{reject}) = 0 & \text{if } s = 0 \tcom \\
  r((l,\mathsf{F}), \mathsf{continue}) & = -f(l+1)(\lambda + \mu) & \text{if } l \geqslant 1  \tcom\\
  r((l,\mathsf{T}), \mathsf{reject})   & = -f(l+1)(\lambda + \mu) & \text{if } l \geqslant 1 \tcom \\
  r((l,\mathsf{T}), \mathsf{accept})  & = [R - f(l+1)](\lambda + \mu) &
\end{align*}
where the factor \(\lambda + \mu\) is an artifact of the uniformisation of the continuous time
problem to the discrete time MDP.

Haviv and Puterman~\cite{haviv1998bias} show that if the cost function has the shape \(f(l)=c \cdot l\),
there are at most two gain-optimal control limit policies. Namely to admit \(L\) or \(L+1\) jobs.
However, only the policy that admits \(L+1\) jobs is also bias-optimal as the extra reward received
offsets the additional cost of the extra job. Furthermore, note that in such cases the reward
function can be simplified by removing the conditions and the first line. We use exactly this cost
function in our experiment.

Further, we choose the challenging problem setup with \(\lambda=5\), \(\mu=5\), \(R=12\), \(c=1\),
and a maximum queue length of \(20\) as also selected by Mahadevan
in~\cite{Mahadevan96_AnAveragerewardReinforcementLearningAlgorithmForComputingBiasoptimalPolicies,Mahadevan96_SensitiveDiscountOptimalityUnifyingDiscountedAndAverageRewardReinforcementLearning}.
To allow a comparison to the optimal solution we implemented the constraints imposed by constraint
formulation of the addends of the Laurent series
expansion~\cite[p.346]{MillerVeinott1969,Puterman94} using mixed integer linear programming (MILP).
The MILP result shows that \(L=2\), i.e.\@ both policies of admitting \(2\) or \(3\) jobs to the
queue are gain-optimal imposing and average reward of \(\avgrew^{\pol} = 30\). However, only
admitting \(3\) jobs is also bias-optimal and with that Blackwell-optimal, as it's the only
gain-optimal policy that is left. This makes sense, as \(R\) is only collected when an order is
accepted, which is for admitting \(3\) jobs immediate in contrast to the policy of admitting \(2\).
Note that the inferred average reward of about \(27.5\) in
\cite{Mahadevan96_SensitiveDiscountOptimalityUnifyingDiscountedAndAverageRewardReinforcementLearning}
is sub-optimal\footnote{It is possible that the reported results
  in~\cite{Mahadevan96_SensitiveDiscountOptimalityUnifyingDiscountedAndAverageRewardReinforcementLearning}
  do not coincide with the given setup, as the queue lengths do not match our results either. Our
  algorithm infers the policy of admitting \(2\) and queue length \(0.68\) if we fix the average
  reward to \(27.5\) on the above specified setup.}.
The correct queue lengths for admitting \(2\) jobs is \(0.67\), while for \(3\) it is \(1.12\).

For the algorithm setup we use the same values as in the gridworld experiment, except that we
changed the \(\epsilon\)-Parameter to constantly be \(5\) as the returned reward is significantly
higher as in the previous example. As the problem is more complex than the previous ones we
decided to perform \(10^{6}\) learning steps before evaluating for \(100k\) periods.

\begin{table}
  \centering
  \ratab{}
  % \footnotesize
  \begin{tabular}{@{}lrrrr@{}}
    \toprule
    % \multirow{2}{*}{\textbf{Admission Ctrl. Queue}} & \multicolumn{2}{c|}{Sum Reward}  & % \multicolumn{2}{c|}{Queue Length}\\
     & \multicolumn{2}{c}{Sum Reward}  & \multicolumn{2}{c}{Queue Length}\\
    \cmidrule(r){2-3}\cmidrule(l){4-5}
    % BEGIN RECEIVE ORGTBL tbl:queuing-system
 Algorithm & Mean & StdDev & Mean & StdDev\\
\midrule
\ARA{} \(\gamma_{1}=1.0\) & \cellcolor{gr}\(\mathbf{2988054.750}\) & \(23977.493\) & \cellcolor{gr}\(1.075\) & \(0.157\)\\
\ARA{} \(\gamma_{1}=0.999\) & \cellcolor{gr}\(2976862.250\) & \(64688.399\) & \cellcolor{gr}\(\mathbf{1.122}\) & \(0.184\)\\
\ARA{} \(\gamma_{1}=0.99\) & \(2683089.250\) & \(1021845.424\) & \(1.545\) & \(1.263\)\\
Q-Learning \(\gamma_{1}=0.99\) & \(45360.750\) & \(13434.404\) & \(0.174\) & \(0.057\)\\
Q-Learning \(\gamma_{1}=0.999\) & \(32609.250\) & \(16216.020\) & \(0.181\) & \(0.080\)\\
Q-Learning \(\gamma_{1}=0.5\) & \(24917.500\) & \(1467.415\) & \(0.002\) & \(0.000\)\\
\bottomrule
    % END RECEIVE ORGTBL tbl:queuing-system
  \end{tabular}

\caption{\label{tbl:queuing-system} The admission control queuing system results, where the
  mean inferred reward of \ARA{} with \(\gamma_1\) of \(1.0\), \(0.999\) and \(0.99\) are
  \(29.965\), \(30.272\) and \(28.734\) respectively}

\end{table}

The results are depicted in Table~\ref{tbl:queuing-system}.
The Friedman test rejected the null hypotheses (with \(3.399\text{e}{-32}\) for the mean
reward and \(2.223\text{e}{-35}\) for the mean queue length). We performed the pairwise Conover
post-hoc tests adjusted by the Benjaminyi-Hochberg FDR method~\cite{benjamini1995controlling} and
highlighted measures that are not statistically distinguishable from each other with the a grey
background (see the Appendix for detailed results).
The \ARA{} variants with \(\gamma_{1}=1.0\) and \(\gamma_{1}=0.999\) infer the Blackwell-optimal
policy of admitting \(3\) jobs and thus accumulate a reward of about \(29.88\) and \(29.77\) per
step over all evaluations of the \(40\) replications. This clearly shows the stability of the \ARA{}
algorithm, especially when \(\gamma_{1}=1.0\). However, in this example ARA{} \(\gamma_{1}=0.99\) is
unable to find the optimal policy in \(12\) replications and therefore performs worse as compared to
the other \ARA{} instances. Statistically the optimal queue length of \(1.12\) is matched by \ARA{}
\(\gamma_{1}=0.999\) and \ARA{} \(\gamma_{1}=1.0\).
In contrast all Q-Learning setups are unable to find even close-to-optimal policies, resulting in
rather small amount of collected rewards and very short mean queue lengths.
Furthermore, we investigated how to get Q-Learning \(\gamma_1=0.99\) to find better solutions. Using
\(\epsilon\)-sensitive comparison, instead of the \(\max\)-operator, for the action selection
process we could infer the gain-optimal policy which admits \(2\) jobs. However, we were unable to
infer the Blackwell-optimal policy of admitting \(3\) jobs and thus collects rewards as soon as
possible with Q-Learning.
Similarly by hyperparameter tuning of \ARA{} \(\gamma_{1}=1.0\), namely increasing the decay rates
of \(\alpha\) and \(\gamma\) to \(0.8\) and omitting the minimum values, we were able to find more
stable state-action values such that setting \(\epsilon \leqslant 1\) is possible while still
finding the optimal policy and the average reward stabilizes even more. We found that the stability
of the average reward is of major importance for the stability our \ARA{} algorithm.

\section{Conclusion}%
\label{sec:Conclusion}

%% \paragraph{Contribution}
%% \paragraph{Implications (Theory)}
%% \paragraph{Implications (Research)}
%% \paragraph{Implications (Managerial)}
%% \MS{link to motivation}

This paper introduces deep theoretical insights to reinforcement learning and explains why standard
discounted reinforcement learning is inappropriate for tasks that are presented with rewards in
non-terminal steps also as they produce an average reward per step that cannot be approximated with
\(0\). This kind of problem structure is easily obtained in real-world problem specifications, for
instance in the field of operations research, where companies constantly aim for profit-optimal
decisions. Furthermore, we established a novel average reward adjusted discounted reinforcement
learning algorithm \ARA{}, which is computationally cheap and deduces near-Blackwell-optimal
policies. Additionally, we implemented the algorithm and prove its viability by testing it on three
different decision problems. The results experimentally expose the superiority of \ARA{} over
standard discounted reinforcement learning.
In the future we plan to use neural networks for function approximation to be able to apply \ARA{}
to bigger sized problems. Adding to this we are planning to develop an actor-critic version of
\ARA{}.

Therefore, in conclusion although machine learning has tremendously advanced and solved many
problems in many different areas the methods may not be directly applicable to operations research.
Thus, as a researcher it is important to have broad background knowledge of the selected method
instead of handling it as a black box, as we showed that sometimes the problem structures require an
adaption of the machine learning technique for successful applications. Nonetheless, machine
learning and in particular reinforcement learning have been producing astonishing results over the
past decades, which when handled wisely can be adapted to our field of research as we demonstrated
by establishing \ARA{}. Therefore, machine learning methods will likely play an important role in
advancing the field of operations research and its techniques over the next years, especially when
used in combination with theoretical insights of our field.

% BibTeX users please use one of
% \bibliographystyle{spbasic}      % basic style, author-year citations
\bibliographystyle{spmpsci}      % mathematics and physical sciences
\bibliography{references}

\vfill
\pagebreak
\appendix

\section{Statistical Results}%
\label{sec:statsistical_results}

This section provides the detailed results of the pairwise Benjaminyi-Hochberg FDR adjusted Conover
statistical analysis for the the gridworld and the admission control queuing system examples.

\subsection{Statistical Significance for the Gridworld Example}

\begin{table}[h!]
  \centering
  \ratab{}
    \begin{tabular}{@{}lrrrrr@{}}
    \toprule
    % \multirow{2}{*}{\textbf{Statistical Significance}} & \multicolumn{3}{c|}{\ARA{}}  &
    % \multicolumn{2}{c|}{Q-Learning}\\
      & \multicolumn{3}{c}{\ARA{}}  & \multicolumn{2}{c}{Q-Learning}\\
    \cmidrule(r){2-4}\cmidrule(l){5-6}
% BEGIN RECEIVE ORGTBL stats-grid
 & \(\gamma_{1}=0.99\) & \(\gamma_{1}=0.999\) & \(\gamma_{1}=1.0\) & \(\gamma_{1}=0.99\) & \(\gamma_{1}=0.999\)\\
\midrule
\ARA{} \(\gamma_1=0.999\) & 6.723e-01 &  &  &  & \\
\ARA{} \(\gamma_1=1.0\) & 7.834e-01 & 7.834e-01 &  &  & \textbf{Mean}\\
Q-Learning  \(\gamma_1=0.99\) & 2.048e-71 & 3.365e-70 & 8.198e-71 &  & \textbf{Sum Rew.}\\
Q-Learning  \(\gamma_1=0.999\) & 1.094e-75 & 1.361e-74 & 3.764e-75 & 6.936e-02 & \\
Q-Learning  \(\gamma_1=0.50\) & 3.893e-96 & 1.392e-95 & 6.353e-96 & 2.333e-25 & 1.148e-19\\
\midrule
\ARA{} \(\gamma_1=0.999\) & 2.465e-06 &  &  &  & \textbf{Mean}\\
\ARA{} \(\gamma_1=1.0\) & 5.249e-04 & 1.957e-01 &  &  & \textbf{Avg. S.}\\
Q-Learning  \(\gamma_1=0.99\) & 3.083e-89 & 3.696e-79 & 5.691e-82 &  & \textbf{to Goal}\\
Q-Learning  \(\gamma_1=0.999\) & 7.900e-88 & 1.531e-77 & 2.041e-80 & 4.583e-01 & \\
Q-Learning  \(\gamma_1=0.50\) & 2.242e-99 & 1.352e-90 & 4.932e-93 & 7.775e-08 & 1.787e-09\\
% END RECEIVE ORGTBL stats-grid
    \bottomrule
\end{tabular}
\caption{\label{tbl:grid-stats} The Benjaminyi-Hochberg FDR adjusted Conover p-values
  for the mean sum of rewards (top) and the mean average number of steps to the goal state (bottom)}
\end{table}

\subsection{Statistical Significance for the Admission Control Queue System Example}

\begin{table}[h!]
  \centering
  \ratab{}
  % \footnotesize
  \begin{tabular}{@{}lrrrrr@{}}
    \toprule
    % \multirow{2}{*}{\textbf{Statistical Significance}} & \multicolumn{3}{c|}{\ARA{}}  & \multicolumn{2}{c|}{Q-Learning}\\
     & \multicolumn{3}{c}{\ARA{}}  & \multicolumn{2}{c}{Q-Learning}\\
    \cmidrule(r){2-4}\cmidrule(l){5-6}
% BEGIN RECEIVE ORGTBL stats
 & \(\gamma_{1}=1.0\) & \(\gamma_{1}=0.999\) & \(\gamma_{1}=0.99\) & \(\gamma_{1}=0.99\) & \(\gamma_{1}=0.999\)\\
\midrule
\ARA{} \(\gamma_{1}=0.999\) & 5.947e-01 &  &  &  & \\
\ARA{} \(\gamma_{1}=0.99\) & 7.002e-11 & 3.469e-12 &  &  & \textbf{Mean}\\
Q-Learning \(\gamma_{1}=0.99\) & 6.035e-64 & 3.056e-65 & 5.232e-45 &  & \textbf{Sum Rew.}\\
Q-Learning \(\gamma_{1}=0.999\) & 1.509e-83 & 1.498e-84 & 7.776e-68 & 5.639e-15 & \\
Q-Learning \(\gamma_{1}=0.5\) & 5.459e-94 & 1.061e-94 & 3.485e-80 & 1.154e-30 & 2.906e-07\\
\midrule
\ARA{} \(\gamma_{1}=0.999\) & 6.556e-01 &  &  &  & \\
\ARA{} \(\gamma_{1}=0.99\) & 1.554e-05 & 2.370e-06 &  &  & \textbf{Mean}\\
Q-Learning \(\gamma_{1}=0.99\) & 4.527e-74 & 4.329e-73 & 7.764e-84 &  & \textbf{Queue Len.}\\
Q-Learning \(\gamma_{1}=0.999\) & 7.425e-69 & 8.293e-68 & 4.425e-79 & 2.858e-02 & \\
Q-Learning \(\gamma_{1}=0.5\) & 1.606e-105 & 6.136e-105 & 1.763e-112 & 1.870e-37 & 4.645e-44\\
% END RECEIVE ORGTBL stats
    \bottomrule
\end{tabular}

\caption{\label{tbl:queuing-system} The Benjaminyi-Hochberg FDR adjusted Conover p-values for the
  mean sum reward (top) and mean queue length (bottom)}

\end{table}

\end{document}